\newtheorem{proposition}{Proposition}
\newtheorem{theorem}[proposition]{Theorem}
\title{Generative Adversarial Positive-Unlabelled Learning}
\author{
Ming Hou$^1$,
Brahim Chaib-draa$^2$,
Chao Li$^1$,
Qibin Zhao$^1$,
\\
$^1$ Center for Advanced Intelligence Project, RIKEN, Tokyo, Japan \\
$^2$ Department of Computer Science and Software Engineering, Laval University, Quebec, Canada \\
ming.hou@riken.jp,
brahim.hou@riken.jp,
chao.li.hf@riken.jp,
qibin.zhao@riken.jp
}
\begin{document}
\maketitle
\begin{abstract}
    In this work, we consider the task of classifying binary positive-unlabeled (PU) data. The existing discriminative learning based PU models attempt to seek an optimal reweighting strategy for U data, so that a decent decision boundary can be found.
    However, given limited P data, the conventional PU models tend to suffer from overfitting when adapted to very flexible deep neural networks. In contrast, we are the first to innovate a totally new paradigm to attack the binary PU task, from perspective of generative learning by leveraging the powerful generative adversarial networks (GAN). Our generative positive-unlabeled (GenPU) framework incorporates an array of discriminators and generators that are endowed with different roles in simultaneously producing positive and negative realistic samples. We provide theoretical analysis to justify that, at equilibrium, GenPU is capable of recovering both positive and negative data distributions. Moreover, we show GenPU is generalizable and closely related to the semi-supervised classification. Given rather limited P data, experiments on both synthetic and real-world dataset demonstrate the effectiveness of our proposed framework. With infinite realistic and diverse sample streams generated from GenPU, a very flexible classifier can then be trained using deep neural networks.
\end{abstract}

\section{Introduction}
Positive-unlabeled (PU) classification \cite{denis1998pac,denis2005learning} has gained great popularity in dealing with limited partially labeled data and succeeded in a broad range of applications such as automatic label identification. Yet, PU can be used for the detection of outliers in an unlabeled dataset with knowledge only from a collection of inlier data \cite{hido2008inlier,smola2009relative}. PU also finds its usefulness in `one-vs-rest' classification task such as land-cover classification (urban vs non-urban) where non-urban data are too diverse to be labeled than urban data \cite{li2011positive}.

The most commonly used PU approaches for binary classification can typically be categorized, in terms of the way of handling U data, into two types \cite{kiryo2017positive}. One type such as \cite{liu2002partially,li2003learning} attempts to recognize negative samples in the U data and then feed them to classical positive-negative (PN) models. However, these approaches depend heavily on the heuristic strategies and often yield a poor solution. The other type, including \cite{liu2003building,lee2003learning}, offers a better solution by treating U data to be N data with a decayed weight. Nevertheless, finding an optimal weight turns out to be quite costly. Most importantly, the classifiers trained based on above approaches suffer from a systematic estimation bias \cite{du2015convex,kiryo2017positive}.

Seeking for unbiased PU classifier, \cite{du2014analysis} investigated the strategy of viewing U data as a weighted mixture of P and N data \cite{elkan2008learning}, and introduced an unbiased risk estimator by exploiting some non-convex symmetric losses, i.e., the ramp loss. Although cancelling the bias, the non-convex loss is undesirable for PU due to the difficulty of non-convex optimization. To this end, \cite{du2015convex} proposed a more general risk estimator which is always unbiased and convex if the convex loss satisfies a linear-odd condition \cite{patrini2016loss}. Theoretically, they argues the estimator yields globally optimal solution, with more appealing learning properties than the non-convex counterpart. More recently, \cite{kiryo2017positive} observed that the aforementioned unbiased risk estimators can go negative without bounding from the below, leading to serious overfitting when the classifier becomes too flexible. To fix this, they presented a non-negative biased risk estimator yet with favorable theoretical guarantees in terms of consistency, mean-squared-error reduction and estimation error. The proposed estimator is shown to be more robust against overfitting than previous unbiased ones. However, given limited P data, the overfitting issue still exists especially when very flexible deep neural network is applied.

Generative models, on the other hand, have the advantage in expressing complex data distribution. Apart from distribution density estimation, generative models are often applied to learn a function that is able to create more samples from the approximate distribution. Lately, a large body of successful deep generative models have emerged, especially generative adversarial networks (GAN) \cite{goodfellow2014generative,salimans2016improved}. GAN intends to solve the task of generative modeling by making two agents play a game against each other. One agent named generator synthesizes fake data from random noise; the other agent, termed as discriminator, examines both real and fake data and determines whether it is real or not. Both agents keep evolving over time and get better and better at their jobs. Eventually, the generator is forced to create synthetic data which is as realistic as possible to those from the training dataset.

Inspired by the tremendous success and expressive power of GAN, we novelly attack the binary PU classification task by resorting to generative modeling, and propose our generative positive-unlabeled (GenPU) learning framework. Building upon GAN, our GenPU model includes an array of generators and discriminators as agents in the game. These agents are devised to play different parts in simultaneously generating positive and negative real-like samples, and thereafter a standard PN classifier can be trained on those synthetic samples. Given a small portion of labeled P data as seeds, GenPU is able to capture the underlying P and N data distributions, with the capability to create infinite diverse P and N samples streams. In this way, the overfitting problem of conventional PU can be greatly mitigated. Furthermore, our GenPU is generalizable in the sense that it can be established by switching to different underlying GAN variants with distance metrics (i.e. Wasserstein GAN \cite{arjovsky2017wasserstein}) other than Jensen-Shannon divergence (JSD). As long as those variants are sophisticated to produce high-quality diverse samples, the optimal accuracy could be achieved by training a very deep neural networks.

Our main contribution$\colon$(i) we are the first to invent a totally new paradigm to effectively solve the PU task through deep generative models; (ii) we provide theoretical analysis to prove that, at equilibrium, our model is capable of learning both positive and negative data distributions; (iii) we experimentally show the effectiveness of the proposed model given limited P data on both synthetic and real-world dataset; (iiii) our method can be easily extended to solve the semi-supervised classification, and also opens a door to new solutions of many other weakly supervised learning tasks from the aspect of generative learning.

\section{Preliminaries}
\subsection{positive-unlabeled (PU) classification}
Given as input $d$-dimensional random variable $\textbf{x} \in \mathbb{R}^{d}$ and scalar random variable $y \in \{\pm1\}$ as class label, and let $p(\textbf{x}, y)$ be the \emph{joint density}, the \emph{class-conditional densities} are$\colon$
\begin{equation*}
  p_{p}(\textbf{x}) = p(\textbf{x}|y=1) \hspace{0.5cm} p_{n}(\textbf{x}) = p(\textbf{x}|y=-1),
\end{equation*}
while $p(\textbf{x})$ refers to as the unlabeled \emph{marginal density}.
The standard PU classification task \cite{ward2009presence} consists of a positive dataset $\mathcal{X}_{p}$ and an unlabeled dataset $\mathcal{X}_{u}$ with i.i.d samples drawn from $p_{p}(\textbf{x})$ and $p(\textbf{x})$, respectively$\colon$
\begin{equation*}
  \mathcal{X}_{p} = \{ \textbf{x}_{p}^{i} \}_{i=1}^{n_{p}} \sim p_{p}(\textbf{x}) \hspace{0.5cm} \mathcal{X}_{u} = \{ \textbf{x}_{u}^{i} \}_{i=1}^{n_{u}} \sim p(\textbf{x}).
\end{equation*}

Due to the fact that the unlabeled data can be regarded as a mixture of both positive and negative samples, the marginal density turns out to be
\begin{equation*}
  p(\textbf{x}) = \pi_{p} p(\textbf{x}|y=1) + \pi_{n} p(\textbf{x}|y=-1),
\end{equation*}
where $\pi_{p}=p(y=1)$ and $\pi_{n}= 1-\pi_{p}$ are denoted as \emph{class-prior probability}, which is usually unknown in advance and can be estimated from the given data \cite{jain2016estimating}. The objective of PU task is to train a classifier on $\mathcal{X}_{p}$ and $\mathcal{X}_{u}$ so as to classify the new unseen pattern $\textbf{x}^{new}$.

In particular, the empirical unbiased risk estimators introduced in \cite{du2014analysis,du2015convex} have a common formulation as
\begin{equation} \label{eq1}
    \widehat{R}_{pu}(g) = \pi_{p} \widehat{R}_{p}^{+}(g) + \widehat{R}_{u}^{-}(g) - \pi_{p} \widehat{R}_{p}^{-}(g),
\end{equation}
where $\widehat{R}_{p}^{+}$, $\widehat{R}_{u}^{-}$ and $\widehat{R}_{p}^{-}$ are the empirical version of the risks$\colon$
\begin{align*}
\begin{split}
    R_{p}^{+}(g) &= \mathbb{E}_{\textbf{x}\sim p_{p}(\textbf{x})} \ell (g(\textbf{x}), +1) \\
    R_{u}^{-}(g) &= \mathbb{E}_{\textbf{x}\sim p(\textbf{x})} \ell (g(\textbf{x}), -1) \\
    R_{p}^{-}(g) &= \mathbb{E}_{\textbf{x}\sim p_{p}(\textbf{x})} \ell (g(\textbf{x}), -1)
\end{split}
\end{align*}
with  $g \in \mathbb{R}^{d} \rightarrow \mathbb{R}$ be the binary classifier and $\ell \in \mathbb{R}^{d} \times \{ \pm 1\} \rightarrow \mathbb{R}$ be the loss function.

In contrast to PU classification, positive-negative (PN) classification assumes all negative samples,
\begin{equation*}
   \mathcal{X}_{n} = \{ \textbf{x}_{n}^{i} \}_{i=1}^{n_{n}} \sim p_{n}(\textbf{x}),
\end{equation*}
are labeled, so that the classifier can be trained in an ordinary supervised learning fashion.

\subsection{generative adversarial networks (GAN)}
GAN, originated in \cite{goodfellow2014generative}, is one of the most recent successful generative models that is equipped with the power of producing distributional outputs. GAN obtains this capability through an adversarial competition between a generator $G$ and a discriminator $D$ that involves optimizing the following minimax objective function$\colon$
\begin{multline} \label{eq2}
  \min_{G} \max_{D} \mathcal{V}(G, D) = \min_{G} \max_{D} \,\, \mathbb{E}_{\textbf{x}\sim p_{x}(\textbf{x})} \log(D(\textbf{x})) \\ + \mathbb{E}_{\textbf{z}\sim p_{z}(\textbf{z})} \log(1-D(G(\textbf{z}))),
\end{multline}
where $p_{x}(\textbf{x})$ represents true data distribution; $p_{z}(\textbf{z})$ is typically a simple prior distribution (e.g., $\mathcal{N}(0, 1)$) for latent code $\textbf{z}$, while a generator distribution $p_{g}(\textbf{x})$ associated with $G$ is induced by the transformation $G(\textbf{z})\colon\textbf{z}\rightarrow \textbf{x}$.

To find the optimal solution, \cite{goodfellow2014generative} employed simultaneous stochastic gradient descent (SGD) for alternately updating $D$ and $G$. The authors argued that, given the optimal $D$, minimizing $G$ is equivalent to minimizing the distribution distance between $p_{x}(\textbf{x})$ and $p_{g}(\textbf{x})$. At convergence, GAN has $p_{x}(\textbf{x})=p_{g}(\textbf{x})$.

\section{Generative PU Classification}
\subsection{problem setting}
Throughout the paper, $\{p_{p}(\textbf{x}), p_{n}(\textbf{x}), p(\textbf{x})\}$ denote the positive data distribution, the negative data distribution and the entire data distribution, respectively. Following the standard PU setting, we assume the data distribution has the form of $p(\textbf{x}) = \pi_{p} p_{p}(\textbf{x}) + \pi_{n} p_{n}(\textbf{x})$. $\mathcal{X}_{p}$ represents the positively labeled dataset while $\mathcal{X}_{u}$ serves as the unlabeled dataset. $\{D_{p}, D_{u}, D_{n}\}$ are referred to as the positive, unlabeled and negative discriminators, while $\{G_{p}, G_{n}\}$ stand for positive and negative generators, targeting to produce real-like positive and negative samples. Correspondingly, $\{p_{gp}(\textbf{x}), p_{gn}(\textbf{x})\}$ describe the positive and negative distributions induced by the generator functions $G_{p}(\textbf{z})$ and $G_{n}(\textbf{z})$.

\begin{figure}[t]
\begin{center}
   \includegraphics[width=0.9\linewidth]{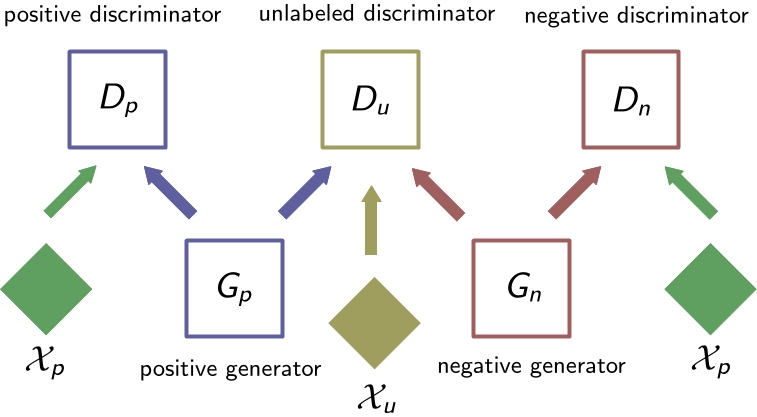}
\end{center}
\vspace{-0.2cm}
    \caption{\small Our GenPU framework. $D_{p}$ receives as inputs the real positive examples from $\mathcal{X}_{p}$ and the synthetic positive examples from $G_{p}$; $D_{n}$ receives as inputs the real positive examples from $\mathcal{X}_{p}$ and the synthetic negative examples from $G_{n}$; $D_{u}$ receives as inputs real unlabeled examples from $\mathcal{X}_{u}$, synthetic positive examples from $G_{p}$ as well as synthetic negative examples from $G_{n}$ at the same time. Associated with different loss functions, $G_{p}$ and $G_{n}$ are designated to generate positive and negative examples, respectively.}
\label{fig1}
\vspace{-0.2cm}
\end{figure}

\subsection{proposed GenPU model}
We build our GenPU model upon GAN by leveraging its massive potentiality in producing realistic data, with the goal of identification of both positive and negative distributions from P and U data. Then, a decision boundary can be made by training standard PN classifier on the generated samples. Fig.\ref{fig1} illustrates the architecture of the proposed framework.

In brief, GenPU framework is an analogy to a minimax game comprising of two generators $\{ G_{p}, G_{n}\}$ and three discriminators $\{D_{p}, D_{u}, D_{n}\}$. Guided by the adversarial supervision of $\{D_{p}, D_{u}, D_{n}\}$, $\{ G_{p}, G_{n}\}$ are tasked with synthesizing positive and negative samples that are indistinguishable with the real ones drawn from $\{p_{p}(\textbf{x}), p_{n}(\textbf{x})\}$, respectively. As being their competitive opponents, $\{D_{p}, D_{u}, D_{n}\}$ are devised to play distinct roles in instructing the learning process of $\{G_{p}, G_{n}\}$.

Among these discriminators, $D_{p}$ intends to discern the positive training samples from the fake positive samples outputted by $G_{p}$, whereas the business of $D_{u}$ is aimed at separating the unlabelled training samples from the fake samples of both $G_{p}$ and $G_{n}$. In the meantime, $D_{n}$ is conducted in a way that it can easily make a distinction between the positive training samples and the fake negative samples from $G_{n}$.

More formally, the overall GenPU objective function can be decomposed, in views of $G_{p}$ and $G_{n}$, as follows$\colon$
\begin{multline} \label{eq3}
    \min_{\{G_{p}, G_{n}\}} \max_{\{D_{p}, D_{u}, D_{n}\}} \mathcal{V}(G, D) = \pi_{p} \min_{G_{p}} \max_{\{D_{p}, D_{u}\}} \\ \mathcal{V}_{G_{p}}(G, D) + \pi_{n} \min_{G_{n}} \max_{\{D_{u}, D_{n}\}} \mathcal{V}_{G_{n}}(G, D),
\end{multline}
where $\pi_{p}$ and $\pi_{n}$ corresponding to $G_{p}$ and $G_{n}$ are the priors for positive class and negative class, satisfying $\pi_{p}+\pi_{n}=1$. Here, we assume $\pi_{p}$ and $\pi_{n}$ are predetermined and fixed.

The first term linked with $G_{p}$ in \eqref{eq3} can be further split into two standard GAN components $GAN_{G_{p}, D_{p}}$ and $GAN_{G_{p}, D_{u}}$:
\begin{multline} \label{eq4}
  \min_{G_{p}} \max_{\{D_{p}, D_{u}\}} \mathcal{V}_{G_{p}}(G, D) =
 \lambda_{p} \min_{G_{p}} \max_{D_{p}} \mathcal{V}_{G_{p}, D_{p}}(G, D) \\ +  \lambda_{u} \min_{G_{p}} \max_{D_{u}} \mathcal{V}_{G_{p}, D_{u}}(G, D),
\end{multline}
where $\lambda_{p}$ and $\lambda_{u}$ are the weights balancing the relative importance of effects between $D_{p}$ and $D_{u}$.
In particular, the value functions of $GAN_{G_{p}, D_{p}}$ and $GAN_{G_{p}, D_{u}}$ are
\begin{multline} \label{eq5}
   \mathcal{V}_{G_{p}, D_{p}}(G, D) = \mathbb{E}_{\textbf{x}\sim p_{p}(\textbf{x})} \log(D_{p}(\textbf{x}) \\ + \mathbb{E}_{\textbf{z}\sim p_{z}(\textbf{z})} \log(1-D_{p}(G_{p}(\textbf{z})))
\end{multline}
and
\begin{multline} \label{eq6}
   \mathcal{V}_{G_{p}, D_{u}}(G, D) = \mathbb{E}_{\textbf{x}\sim p_{u}(\textbf{x})} \log(D_{u}(\textbf{x})) \\ + \mathbb{E}_{\textbf{z}\sim p_{z}(\textbf{z})} \log(1-D_{u}(G_{p}(\textbf{z}))).
\end{multline}

On the other hand, the second term linked with $G_{n}$ in \eqref{eq3} can also be split into GAN components, namely $GAN_{G_{n}, D_{u}}$ and $GAN_{G_{n}, D_{n}}$:
\begin{multline} \label{eq7}
  \min_{G_{n}} \max_{\{D_{u}, D_{n}\}} \mathcal{V}_{G_{p}}(G, D) =
  \lambda_{u} \min_{G_{n}} \max_{D_{u}} \mathcal{V}_{G_{n}, D_{u}}(G, D) \\ + \lambda_{n} \min_{G_{n}} \max_{D_{n}} \mathcal{V}_{G_{n}, D_{n}}(G, D)
\end{multline}
whose weights $\lambda_{u}$ and $\lambda_{n}$ control the trade-off of $D_{u}$ and $D_{n}$. $GAN_{G_{n}, D_{u}}$ also takes the form of the standard GAN with the value function
\begin{multline} \label{eq8}
   \mathcal{V}_{G_{n}, D_{u}}(G, D) = \mathbb{E}_{\textbf{x}\sim p_{u}(\textbf{x})} \log(D_{u}(\textbf{x})) \\ + \mathbb{E}_{\textbf{z}\sim p_{z}(\textbf{z})} \log(1-D_{u}(G_{n}(\textbf{z})))
\end{multline}

In contrast to the ‘zero-sum’ loss applied elsewhere, the optimization of $GAN_{G_{n}, D_{n}}$ is given by first maximizing the objective
\begin{multline} \label{eq9}
   D_{n}^{\star} = \arg \max_{D_{n}} \mathbb{E}_{\textbf{x}\sim p_{p}(\textbf{x})} \log(D_{n}(\textbf{x})) \\ + \mathbb{E}_{\textbf{z}\sim p_{z}(\textbf{z})} \log(1-D_{n}(G_{n}(\textbf{z}))),
\end{multline}
then plugging $D_{n}^{\star}$ into the following equation $\eqref{eq10}$, leading to the value function w.r.t $G_{n}$ as
\begin{multline} \label{eq10}
    \mathcal{V}_{G_{n}, D_{n}^{\star}}(G, D_{n}^{\star}) = - \mathbb{E}_{\textbf{x}\sim p_{p}(\textbf{x})} \log(D_{n}^{\star}(\textbf{x})) \\ - \mathbb{E}_{\textbf{z}\sim p_{z}(\textbf{z})} \log(1-D_{n}^{\star}(G_{n}(\textbf{z}))).
\end{multline}

Intuitively, equations \eqref{eq5}-\eqref{eq6} indicate $G_{p}$, co-supervised under $D_{p}$ and $D_{u}$, endeavours to minimize the distance between the induced distribution $p_{gp}(\textbf{x})$ and positive data distribution $p_{p}(\textbf{x})$, while striving to stay around within the whole data distribution $p(\textbf{x})$. In fact, $G_{p}$ tries to deceive both discriminators by simultaneously maximizing $D_{p}$'s and $D_{u}$'s outputs on fake positive samples. As a result, the loss terms in \eqref{eq5} and \eqref{eq6} jointly guide $p_{gp}(\textbf{x})$ gradually moves towards and finally settles to $p_{p}(\textbf{x})$ of $p(\textbf{x})$.

On the other hand, equations \eqref{eq8}-\eqref{eq10}, suggest $G_{n}$, when facing both $D_{u}$ and $D_{n}$, struggles to make the induced $p_{gn}(\textbf{x})$ stay away from $p_{p}(\textbf{x})$, and also makes its effort to force $p_{gn}(\textbf{x})$ lie within $p(\textbf{x})$.

To achieve that, the objective in equation \eqref{eq10} favors $G_{n}$ to produce negative examples; this in turn helps $D_{n}$ to maximize the objective in \eqref{eq9} to separate positive training samples from fake negative samples rather than confusing $D_{n}$. Notice that, in the value function \eqref{eq10}, $G_{n}$ is designed to minimize $D_{n}$'s output instead of maximizing it when feeding $D_{n}$ with fake negative samples. Consequently, $D_{n}$ will send uniformly negative feedback to $G_{n}$. In this way, the gradient information derived from negative feedback drives down $p_{gn}(\textbf{x})$ near the positive data region $p_{p}(\textbf{x})$. In the meantime, the gradient signals from $D_{u}$ increase $p_{gn}(\textbf{x})$ outside the positive region but still restricting $p_{gn}(\textbf{x})$ in the true data distribution $p(\textbf{x})$. This crucial effect will eventually push $p_{gn}(\textbf{x})$ away from $p_{p}(\textbf{x})$ but towards $p_{n}(\textbf{x})$.

In practice, both discriminators $\{D_{p}, D_{u}, D_{n}\}$ and generators $\{G_{p}, G_{n}\}$ are implemented using deep neural networks parameterized by $\{\theta_{D_{p}}, \theta_{D_{u}}, \theta_{D_{n}}\}$ and $\{\theta_{G_{p}}, \theta_{G_{n}}\}$. The learning procedure of alternately updating between $\{\theta_{D_{p}}, \theta_{D_{u}}, \theta_{D_{n}}\}$ and $\{\theta_{G_{p}}, \theta_{G_{n}}\}$ via SGD are summarized in Algorithm \ref{alg1}. In particular, the tradeoff weights $\lambda_{p}$, $\lambda_{u}$ and $\lambda_{n}$ are the hyperparameters, while the class-priors $\pi_{p}$ and $\pi_{n}$ are known and fixed in advance.

\begin{algorithm}
\small
    \caption{generative positive-unlabeled (GenPU) learning}  \label{alg1}
\begin{algorithmic}[1]
\small
    \STATE {\bfseries Input$\colon$}positive weight $\lambda_{p}$, unlabeled weight $\lambda_{u}$ and negative weight $\lambda_{n}$
    \STATE {\bfseries Output$\colon$}discriminator parameters $\{\theta_{D_{p}}, \theta_{D_{u}}, \theta_{D_{n}}\}$ and generator parameters $\{\theta_{G_{p}}, \theta_{G_{n}}\}$
    \FOR{number of training iterations}
        \STATE \# \textbf{update discriminator networks $\{D_{p},D_{u},D_{n}\}$} \#
        \STATE sample minibatch of noise examples $\{\textbf{z}^{i} \}_{i=1}^{m}$ from noise prior $p_{z}(\textbf{z})$
        \STATE sample minibatch of positive examples $\{\textbf{x}_{p}^{i} \}_{i=1}^{m}$ from positive data distribution $p_{p}(\textbf{x})$
        \STATE sample minibatch of unlabeled examples $\{\textbf{x}_{u}^{i} \}_{i=1}^{m}$ from unlabeled data distribution $p_{u}(\textbf{x})$
        \STATE update the positive discriminator $D_{p}$ by ascending its stochastic gradient$\colon$ \\
        $\nabla_{\theta_{D_{p}}} \frac{1}{m} \sum_{i=1}^{m} \pi_{p} \lambda_{p} [\log(D_{p}(\textbf{x}_{p}^{i})) + \log(1-D_{p}(G_{p}(\textbf{z}^{i})))]$
        \STATE update the negative discriminator $D_{n}$ by ascending its stochastic gradient$\colon$ \\
        $\nabla_{\theta_{D_{n}}} \frac{1}{m} \sum_{i=1}^{m} \pi_{n} \lambda_{n} [\log(D_{n}(\textbf{x}_{p}^{i})) + \log(1-D_{n}(G_{n}(\textbf{z}^{i})))]$
        \STATE update the unlabeled discriminator $D_{u}$ by ascending its stochastic gradient$\colon$ \\
        $\nabla_{\theta_{D_{u}}} \frac{1}{m} \sum_{i=1}^{m} \lambda_{u} [\log(D_{u}(\textbf{x}_{u}^{i})) +$ \\
        $ \hspace{1.0cm} \pi_{p} \log(1-D_{u}(G_{p}(\textbf{z}^{i}))) + \pi_{n} \log(1-D_{u}(G_{n}(\textbf{z}^{i})))]$
        \STATE \# \textbf{update generator networks $\{G_{p},G_{n}\}$} \#
        \STATE sample minibatch of noise examples $\{\textbf{z}^{i} \}_{i=1}^{m}$ from noise prior $\mathbb{P}(\textbf{z})$
        \STATE update the positive generator $G_{p}$ by descending its stochastic gradient$\colon$ \\
        $\nabla_{\theta_{G_{p}}} \frac{1}{m} \sum_{i=1}^{m} \pi_{p} [- \lambda_{p} \log(D_{p}(G_{p}(\textbf{z}^{i})))$ \\
        $ \hspace{4.5cm} - \lambda_{u} \log(D_{u}(G_{p}(\textbf{z}^{i})))]$ \\
        \STATE update the negative generator $G_{n}$ by descending its stochastic gradient$\colon$ \\
        $\nabla_{\theta_{G_{n}}} \frac{1}{m} \sum_{i=1}^{m} \pi_{n} [- \lambda_{u} \log(D_{u}(G_{n}(\textbf{z}^{i}))) $ \\
        $ \hspace{4.5cm} \lambda_{n} \log(D_{n}(G_{n}(\textbf{z})))]$ \\
    \ENDFOR
\end{algorithmic}
\end{algorithm}

\subsection{theoretical analysis}
Theoretically, suppose all the $\{G_{p}, G_{n}\}$ and $\{D_{p}, D_{u}, D_{n}\}$ have enough capacity. Then the following results show that, at Nash equilibrium point of \eqref{eq3}, the minimal JSDs between the distributions induced by $\{G_{p}, G_{n}\}$ and data distributions $\{p_{p}(\textbf{x}), p_{n}(\textbf{x})\}$ are achieved, respectively, i.e., $p_{gp}(\textbf{x})=p_{p}(\textbf{x})$ and $p_{gn}(\textbf{x})=p_{n}(\textbf{x})$. Meanwhile, the JSD between the distribution induced by $G_{n}$ and data distribution $p_{p}(\textbf{x})$ is maximized, i.e., $p_{gn}(\textbf{x})$ almost never overlaps with $p_{p}(\textbf{x})$.
\begin{proposition}
Given fixed generators $G_{p}$, $G_{n}$ and known class prior $\pi_{p}$, the optimal discriminators $D_{p}$, $D_{u}$ and $D_{n}$ for the objective in equation \eqref{eq3} have the following forms$\colon$
\begin{equation*}
  D_{p}^{\star}(\textbf{x}) =  \frac{p_{p}(\textbf{x})}{p_{p}(\textbf{x}) + p_{gp}(\textbf{x})},
\end{equation*}
\begin{equation*}
  D_{u}^{\star}(\textbf{x}) =  \frac{p(\textbf{x})}{p(\textbf{x}) + \pi_{p} p_{gp}(\textbf{x}) + \pi_{n} p_{gn}(\textbf{x})}
\end{equation*}
and
\begin{equation*}
  D_{n}^{\star}(\textbf{x}) =  \frac{p_{p}(\textbf{x})}{p_{p}(\textbf{x}) + p_{gn}(\textbf{x})}.
\end{equation*}
\end{proposition}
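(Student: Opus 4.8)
The plan is to imitate the classical argument of Goodfellow et al.\ for the optimal GAN discriminator, applied separately to each of $D_{p}$, $D_{u}$, $D_{n}$, exploiting the fact that in the objective \eqref{eq3} each discriminator is coupled only to a fixed, identifiable subset of the GAN sub-games, and that the generators --- hence the induced densities $p_{gp}$ and $p_{gn}$ --- are held fixed.

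First I would record the one analytic ingredient: for constants $a,b>0$ the map $t\mapsto a\log t + b\log(1-t)$ on $(0,1)$ is strictly concave with unique maximizer $t^{\star}=a/(a+b)$. Next, for each discriminator I would collect every term of \eqref{eq3} in which it occurs, rewrite the generator expectations as expectations over the induced data-space densities through the pushforward identity $\mathbb{E}_{\mathbf{z}\sim p_{z}}\,\phi(G(\mathbf{z})) = \mathbb{E}_{\mathbf{x}\sim p_{g}}\,\phi(\mathbf{x})$, and recast the result as a single integral $\int(\,\cdot\,)\,d\mathbf{x}$ whose integrand has the $a\log t + b\log(1-t)$ shape in $t=D(\mathbf{x})$. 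Since positive multiplicative constants ($\pi_{p}\lambda_{p}$ for $D_{p}$, $\lambda_{u}$ for $D_{u}$, $\pi_{n}\lambda_{n}$ for $D_{n}$) do not move the argmax, pointwise maximization of the integrand gives the claimed closed forms.

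Concretely: $D_{p}$ appears only in $GAN_{G_{p},D_{p}}$ via \eqref{eq5}, so its integrand is $p_{p}(\mathbf{x})\log D_{p}(\mathbf{x}) + p_{gp}(\mathbf{x})\log(1-D_{p}(\mathbf{x}))$, yielding $D_{p}^{\star}=p_{p}/(p_{p}+p_{gp})$; similarly $D_{n}$ appears only in the maximization step \eqref{eq9}, with integrand $p_{p}(\mathbf{x})\log D_{n}(\mathbf{x}) + p_{gn}(\mathbf{x})\log(1-D_{n}(\mathbf{x}))$, yielding $D_{n}^{\star}=p_{p}/(p_{p}+p_{gn})$. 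The one case that needs a little bookkeeping is $D_{u}$, which is shared by $GAN_{G_{p},D_{u}}$ and $GAN_{G_{n},D_{u}}$ and thus enters \eqref{eq3} with weights $\pi_{p}\lambda_{u}$ on \eqref{eq6} and $\pi_{n}\lambda_{u}$ on \eqref{eq8}: here one checks that the two copies of $\mathbb{E}_{\mathbf{x}\sim p_{u}}\log D_{u}(\mathbf{x})$ recombine with coefficient $\pi_{p}+\pi_{n}=1$, while the fake-sample terms recombine into the prior-weighted mixture $\pi_{p}p_{gp}+\pi_{n}p_{gn}$, so the integrand is $p(\mathbf{x})\log D_{u}(\mathbf{x}) + \big(\pi_{p}p_{gp}(\mathbf{x})+\pi_{n}p_{gn}(\mathbf{x})\big)\log(1-D_{u}(\mathbf{x}))$, maximized exactly at the stated $D_{u}^{\star}$.

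There is essentially no hard step --- the statement is a direct generalization of the standard single-pair GAN lemma --- and the only place to slip is the accounting for $D_{u}$, i.e.\ ensuring the real-data term carries coefficient $1$ (using $\pi_{p}+\pi_{n}=1$) and the fake-data term is the mixture $\pi_{p}p_{gp}+\pi_{n}p_{gn}$ rather than an unweighted sum. I would also remark, as in the original GAN analysis, that each optimum is only meaningful on the support where the corresponding denominator is positive, and $D^{\star}$ may be defined arbitrarily off that set.
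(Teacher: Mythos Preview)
Your proposal is correct and follows essentially the same approach as the paper's own proof, which simply states that one differentiates $\mathcal{V}(G,D)$ in functional space with respect to each discriminator and sets the derivatives to zero; your pointwise maximization via the $a\log t + b\log(1-t)$ lemma is exactly this computation made explicit, and your careful bookkeeping for $D_{u}$ (combining the $\pi_{p}$- and $\pi_{n}$-weighted copies using $\pi_{p}+\pi_{n}=1$) supplies the only nontrivial step that the paper leaves implicit.
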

\begin{proof}
Assume that all the discriminators $D_{p}$, $D_{u}$ and $D_{n}$ can be optimized in functional space. Differentiating the objective $\mathcal{V}(G, D)$ in \eqref{eq3} w.r.t. $D_{p}$, $D_{u}$ and $D_{n}$ and equating the functional derivatives to zero, we can obtain the optimal $D_{p}^{\star}$, $D_{u}^{\star}$ and $D_{n}^{\star}$ as described above.
\end{proof}
\begin{theorem}
Suppose the data distribution $p(\textbf{x})$ in the standard PU learning setting takes form of $p(\textbf{x}) = \pi_{p} p_{p}(\textbf{x}) + \pi_{n} p_{n}(\textbf{x})$, where $p_{p}(\textbf{x})$ and $p_{n}(\textbf{x})$ are well-separated. Given the optimal $D_{p}^{\star}$, $D_{u}^{\star}$ and $D_{n}^{\star}$, the minimax optimization problem in \eqref{eq3} obtains its optimal solution if
\begin{equation}  \label{eq11}
  p_{gp}(\textbf{x})=p_{p}(\textbf{x}) \hspace{0.2cm} \text{and} \hspace{0.2cm} p_{gn}(\textbf{x})=p_{n}(\textbf{x}),
\end{equation}
with the objective value of $-(\pi_{p} \lambda_{p} + \lambda_{u}) \log(4)$.
\end{theorem}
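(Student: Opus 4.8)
The plan is to follow the classical Goodfellow-style argument, adapted to three discriminators and two generators. First I would substitute the closed forms of $D_p^\star$, $D_u^\star$, $D_n^\star$ from Proposition 1 into the value function \eqref{eq3} and rewrite each adversarial block as a Jensen--Shannon divergence up to an additive constant. The $GAN_{G_p,D_p}$ block \eqref{eq5} evaluated at $D_p^\star$ gives $2\,\mathrm{JSD}(p_p\|p_{gp})-\log 4$, carried with weight $\pi_p\lambda_p$. Because $D_u$ is shared between $G_p$ and $G_n$, I would merge \eqref{eq6} and \eqref{eq8} with their class-prior weights (exactly as in Algorithm~\ref{alg1}, line 10) into a single GAN whose ``real'' law is $p(\textbf{x})$ and whose ``fake'' law is $M(\textbf{x}) := \pi_p p_{gp}(\textbf{x})+\pi_n p_{gn}(\textbf{x})$, a legitimate density since $\pi_p+\pi_n=1$; at $D_u^\star$ this block equals $2\,\mathrm{JSD}(p\|M)-\log 4$ with weight $\lambda_u$. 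Finally, substituting $D_n^\star$ into \eqref{eq10} and noting that \eqref{eq10} is the negation of the ordinary GAN value \eqref{eq9} for ``real'' $p_p$ and ``fake'' $p_{gn}$, this contributes $\log 4 - 2\,\mathrm{JSD}(p_p\|p_{gn})$ with weight $\pi_n\lambda_n$. Adding the pieces, the objective the generators must minimize reduces to
\[
C(p_{gp},p_{gn}) = \pi_p\lambda_p\bigl(2\,\mathrm{JSD}(p_p\|p_{gp})-\log4\bigr) + \lambda_u\bigl(2\,\mathrm{JSD}(p\|M)-\log4\bigr) + \pi_n\lambda_n\bigl(\log4-2\,\mathrm{JSD}(p_p\|p_{gn})\bigr).
\]

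Next I would bound each summand using $0\le\mathrm{JSD}(\cdot\|\cdot)\le\log 2$: the first term is $\ge-\log4$, with equality iff $p_{gp}=p_p$; the second is $\ge-\log4$, with equality iff $M=p$, i.e.\ $\pi_p p_{gp}+\pi_n p_{gn}=\pi_p p_p+\pi_n p_n$; the third is $\ge0$, with equality iff $p_{gn}$ and $p_p$ are mutually singular. This yields the lower bound $C\ge-(\pi_p\lambda_p+\lambda_u)\log4$. The crux is that this bound is \emph{simultaneously} attainable, and here the hypothesis that $p_p$ and $p_n$ are well-separated is exactly what is needed: setting $p_{gp}=p_p$ and $p_{gn}=p_n$ makes all three equalities hold at once (the third because disjoint supports force $\mathrm{JSD}(p_p\|p_n)=\log2$), so the lower bound is the true minimum and the optimal value is $-(\pi_p\lambda_p+\lambda_u)\log4$ as claimed. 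For the converse, any minimizer must attain equality in all three terms, so the first forces $p_{gp}=p_p$, and combining this with $M=p$ and $\pi_n\neq0$ forces $p_{gn}=p_n$, which is automatically consistent with the third (singularity) condition; hence \eqref{eq11} is the unique equilibrium, and it is independent of the positive weights $\lambda_p,\lambda_u,\lambda_n$.

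The step I expect to be the main obstacle is the $GAN_{G_n,D_n}$ term, whose sign is reversed relative to a standard GAN: it rewards $G_n$ for driving $p_{gn}$ \emph{away} from $p_p$, so on its own it has no interior optimum. One must argue that this ``repulsion'' does not conflict with the mixture constraint $M=p$ imposed through $D_u$ --- the place where well-separation of $p_p$ and $p_n$ is indispensable --- and use the upper bound $\mathrm{JSD}\le\log2$ to see that the composite objective is nonetheless bounded below. A secondary point to state carefully is the pointwise optimization justifying that the merged $D_u$ objective attains $2\,\mathrm{JSD}(p\|M)-\log4$ at $D_u^\star$, which hinges on $M$ integrating to one.
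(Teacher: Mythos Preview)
Your proposal is correct and follows essentially the same route as the paper: substitute the optimal discriminators, collapse the three adversarial blocks into the weighted sum $\pi_p\lambda_p\,[2\,\mathrm{JSD}(p_p\|p_{gp})-\log4]+\lambda_u\,[2\,\mathrm{JSD}(p\|\pi_p p_{gp}+\pi_n p_{gn})-\log4]-\pi_n\lambda_n\,[2\,\mathrm{JSD}(p_p\|p_{gn})-\log4]$, and then read off the optimality conditions term by term. Your write-up is in fact more explicit than the paper's on two points the paper leaves implicit: the use of the upper bound $\mathrm{JSD}\le\log2$ to handle the sign-reversed $D_n$ block, and the converse argument that any minimizer must satisfy \eqref{eq11}.
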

\begin{proof}
Substituting the optimal $D_{p}^{\star}$, $D_{u}^{\star}$ and $D_{n}^{\star}$ into \eqref{eq3}, the objective can be rewritten as follows$\colon$
\begin{multline} \label{eq12}
  \mathcal{V}(G, D^{\star}) = \pi_{p} \cdot \{ \lambda_{p} \cdot [ \mathbb{E}_{\textbf{x}\sim p_{p}(\textbf{x})} \log(\frac{p_{p}(\textbf{x})}{p_{p}(\textbf{x}) + p_{gp}(\textbf{x})}) \\ + \mathbb{E}_{\textbf{x}\sim p_{gp}(\textbf{x})} \log(\frac{p_{gp}(\textbf{x})}{p_{p}(\textbf{x}) + p_{gp}(\textbf{x})}) ]
  \\ + \lambda_{u} \cdot  [
  \mathbb{E}_{\textbf{x}\sim p_{u}(\textbf{x})} \log(\frac{p(\textbf{x})}{p(\textbf{x}) + \pi_{p} p_{gp}(\textbf{x}) + \pi_{n} p_{gn}(\textbf{x})}) \\ + \mathbb{E}_{\textbf{x}\sim p_{gp}(\textbf{x})} \log(\frac{\pi_{p} p_{gp}(\textbf{x}) + \pi_{n} p_{gn}(\textbf{x})}{p(\textbf{x}) + \pi_{p} p_{gp}(\textbf{x}) + \pi_{n} p_{gn}(\textbf{x})})] \} \\ + \pi_{n} \cdot \{ \lambda_{u} \cdot [
  \mathbb{E}_{\textbf{x}\sim p_{u}(\textbf{x})} \log(\frac{p(\textbf{x})}{p(\textbf{x}) + \pi_{p}  p_{gp}(\textbf{x}) + \pi_{n} p_{gn}(\textbf{x})}) \\ + \mathbb{E}_{\textbf{x}\sim p_{gn}(\textbf{x})} \log(\frac{\pi_{p} p_{gp}(\textbf{x}) + \pi_{n} p_{gn}(\textbf{x})}{p(\textbf{x}) + \pi_{p} p_{gp}(\textbf{x}) + \pi_{n} p_{gn}(\textbf{x})})]
  \\ - \lambda_{n} \cdot [\mathbb{E}_{\textbf{x}\sim p_{p}(\textbf{x})} \log(\frac{p_{p}(\textbf{x})}{p_{p}(\textbf{x}) + p_{gn}(\textbf{x})}) \\ + \mathbb{E}_{\textbf{x}\sim p_{gn}(\textbf{x})} \log(\frac{p_{gn}(\textbf{x})}{p_{p}(\textbf{x}) + p_{gn}(\textbf{x})})] \}
\end{multline}

Combining the intermediate terms associated with $\lambda_{u}$ using the fact $\pi_{p} + \pi_{n} = 1$, we reorganize \eqref{eq12} and arrive at
\begin{multline} \label{eq13}
  G^{\star} = \arg\min_{G} \mathcal{V}(G, D^{\star})
   \\ = \arg\min_{G} \,\, \pi_{p} \cdot \lambda_{p} \cdot  [2 \cdot \text{JSD}(p_{p} \, \Vert \, p_{gp}) - \log(4)]
   \\ + \lambda_{u} \cdot [2 \cdot \text{JSD}(p \, \Vert \, \pi_{p} p_{gp} + \pi_{n} p_{gn}) - \log(4)] \\ - \pi_{n} \cdot \lambda_{n} \cdot [2 \cdot \text{JSD}(p_{p} \, \Vert \, p_{gn}) - \log(4)],
\end{multline}
which peaks its minimum if
\begin{equation} \label{eq14}
    p_{gp}(\textbf{x})=p_{p}(\textbf{x}),
\end{equation}
\begin{equation} \label{eq15}
    \pi_{p} \cdot  p_{gp}(\textbf{x}) + \pi_{n} \cdot p_{gn}(\textbf{x}) = p(\textbf{x})
\end{equation}
and for almost every \textbf{x} except for those in a zero measure set
\begin{equation} \label{eq16}
  p_{p}(\textbf{x}) > 0 \Rightarrow p_{gn}(\textbf{x}) = 0, \,\,
  p_{gn}(\textbf{x}) > 0 \Rightarrow p_{p}(\textbf{x}) = 0.
\end{equation}
The solution to $G = \{G_{p}, G_{n}\}$ must jointly satisfy the conditions described in \eqref{eq14}, \eqref{eq15} and \eqref{eq16}, which implies \eqref{eq11} and leads to the minimum objective value of $-(\pi_{p} \lambda_{p} + \lambda_{u}) \log(4)$.
\end{proof}

The theorem reveals that approaching to Nash equilibrium is equivalent to jointly minimizing $\text{JSD}(p \, \Vert \, \pi_{p} p_{gp} + \pi_{n} p_{gn})$ and $\text{JSD}(p_{p} \, \Vert \, p_{gp})$ and maximizing $\text{JSD}(p_{p} \, \Vert \, p_{gn})$ at the same time, thus exactly capturing $p_{p}$ and $p_{n}$.

\subsection{extension to semi-supervised classification}
The goal of semi-supervised classification is to learn a classifier from positive, negative and unlabeled data. In such context, besides training sets $\mathcal{X}_{p}$ and $\mathcal{X}_{u}$, a partially labeled negative set $\mathcal{X}_{n}$ is also available, with samples drawn from negative data distribution $p_{n}(\textbf{x})$.

In fact, the very same architecture of GenPU can be applied to the semi-supervised classification task by just adapting the standard GAN value function to $G_{n}$, then the total value function turns out to be
\begin{multline} \label{eq17}
  \mathcal{V}(G, D) =
  \\ \pi_{p} \cdot \{ \lambda_{p} \cdot [ \mathbb{E}_{\textbf{x}\sim p_{p}(\textbf{x})} \log(D_{p}(\textbf{x})) + \mathbb{E}_{\textbf{z}\sim p_{z}(\textbf{z})} \log(1-D_{p}(G_{p}(\textbf{z}))) ]
  \\ + \lambda_{u} \cdot  [
  \mathbb{E}_{\textbf{x}\sim p_{u}(\textbf{x})} \log(D_{u}(\textbf{x})) + \mathbb{E}_{\textbf{z}\sim p_{z}(\textbf{z})} \log(1-D_{u}(G_{p}(\textbf{z})))] \}
  \\ +
  \\ \pi_{n} \cdot \{ \lambda_{u} \cdot [
  \mathbb{E}_{\textbf{x}\sim p_{u}(\textbf{x})} \log(D_{u}(\textbf{x})) + \mathbb{E}_{\textbf{z}\sim p_{z}(\textbf{z})} \log(1-D_{u}(G_{n}(\textbf{z})))]
  \\ + \lambda_{n} \cdot [ \mathbb{E}_{\textbf{x}\sim p_{n}(\textbf{x})} \log(D_{n}(\textbf{x})) + \mathbb{E}_{\textbf{z}\sim p_{z}(\textbf{z})} \log(1-D_{n}(G_{n}(\textbf{z}))) ] \},
\end{multline}
Under formulation \eqref{eq17}, $D_{n}$ discriminates the negative training samples from the synthetic negative samples produced by $G_{n}$. Now $G_{n}$ intends to fool $D_{n}$ and $D_{u}$ simultaneously by outputting realistic examples, just like $G_{p}$ does for $D_{p}$ and $D_{u}$. Being attracted by both $p(\textbf{x})$ and $p_{n}(\textbf{x})$ , the induced distribution $p_{gn}(\textbf{x})$ slightly approaches to $p_{n}(\textbf{x})$ and finally recovers the true distribution $p_{n}(\textbf{x})$ of $p(\textbf{x})$. Theoretically, it is not hard to show the optimal $G_{p}$ and $G_{n}$, at convergence, give rise to $p_{gp}(\textbf{x})=p_{p}(\textbf{x})$ and $p_{gn}(\textbf{x})=p_{n}(\textbf{x})$.

\section{Related Work}
A variety of approaches involving training ensembles of GANs have been recently explored. Among them, D2GAN \cite{nguyen2017dual} adopted two discriminators to jointly minimize Kullback-Leibler (KL) and reserve KL divergences, so as to exploit the complementary statistical properties of the two divergences to diversify the data generation. For the purpose of stabilizing GAN training, \cite{neyshabur2017stabilizing} proposed to train one generator against a set of discriminators. Each discriminator pays attention to a different random projection of the data, thus avoiding perfect detection of the fake samples. By doing so, the issue of vanishing gradients for discriminator could be effectively alleviated since the low-dimensional projections of the true data distribution are less concentrated. A similar $N$-discriminator extension to GAN was employed in \cite{durugkar2016generative} with the objective to accelerate training of generator to a more stable state with higher quality output, by using various strategies of aggregating feedbacks from multiple discriminators. Inspired by boosting techniques, \cite{wang2016ensembles} introduced an heuristic strategy based on the cascade of GANs to mitigate the missing modes problem. Addressing the same issue, \cite{tolstikhin2017adagan} developed an additive procedure to incrementally train a mixture of generators, each of which is assumed to cover some modes well in data space. \cite{ghosh2017multi} presented MAD-GAN, an architecture that incorporates multiple generators and diversity enforcing terms, which has been shown to capture diverse modes with plausible samples. Rather than using diversity enforcing terms, MGAN \cite{hoang2017multi} employed an additional multi-class classifier that allows it to identify which source the synthetic sample comes from. This modification leads to the effect of maximizing JSD among generator distributions, thus encouraging generators to specialize in different data modes.

The common theme in aforementioned variants mainly focus on improving the training properties of GAN, e.g., voiding mode collapsing problem. By contrast, our framework fundamentally differs from them in terms of motivation, application and architecture. Our model is specialized in PU classification task and can potentially be extended to other tasks in the context of weakly supervised learning. Nevertheless, we do share a point in common in sense that multiple discriminators or generators are exploited, which is shown to be more stable and easier to train than the original GAN.

\section{Experimental Results}
We show the efficacy of our framework by conducting experiments on synthetic and real-world images datasets. For real data, the approaches including oracle PN, unbiased PU (UPU) \cite{du2015convex}, non-negative PU (NNPU) \cite{kiryo2017positive} \footnote{The software codes for UPU and NNPU are downloaded from https://github.com/kiryor/nnPUlearning} are selected for comparison. Specifically, the Oracle PN means all the training labels are available for all the P and N data, whose performance is just used as a reference for other approaches. As for the UPU and NNPU, the true class-prior $\pi_{p}$ is assumed to be known in advance. For practical implementation, we apply the non-saturating heuristic loss \cite{goodfellow2014generative} to the underlying standard GAN, when the multi-layer perceptron (MLP) network is used.

\begin{figure} [t]
  \begin{subfigure}[b]{0.19\linewidth}
    \includegraphics[trim=0.3cm 0.6cm 0.3cm 0.6cm, width=1.8cm, clip]{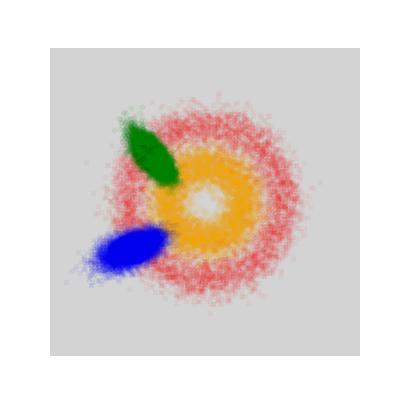}
  \end{subfigure}
  \begin{subfigure}[b]{0.19\linewidth}
    \includegraphics[trim=0.3cm 0.6cm 0.3cm 0.6cm, width=1.8cm, clip]{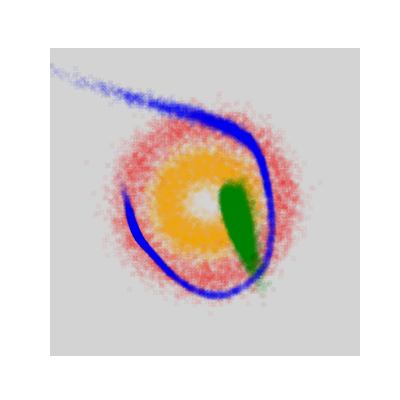}
  \end{subfigure}
  \begin{subfigure}[b]{0.19\linewidth}
    \includegraphics[trim=0.3cm 0.6cm 0.3cm 0.6cm, width=1.8cm, clip]{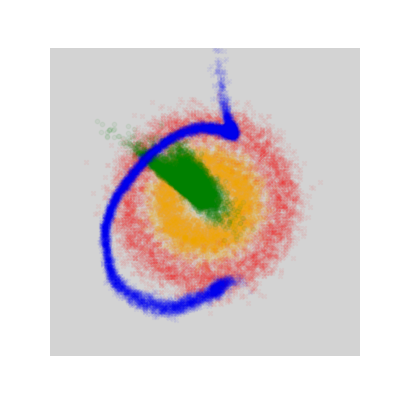}
  \end{subfigure}
  \begin{subfigure}[b]{0.19\linewidth}
    \includegraphics[trim=0.3cm 0.6cm 0.3cm 0.6cm, width=1.8cm, clip]{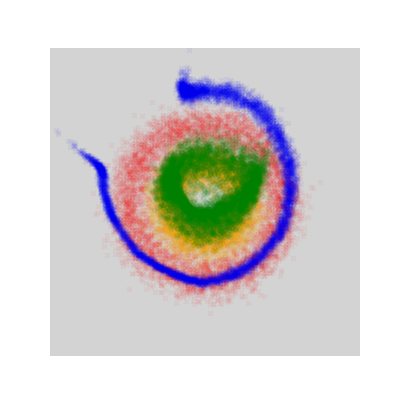}
  \end{subfigure}
  \begin{subfigure}[b]{0.19\linewidth}
    \includegraphics[trim=0.3cm 0.6cm 0.3cm 0.6cm, width=1.8cm, clip]{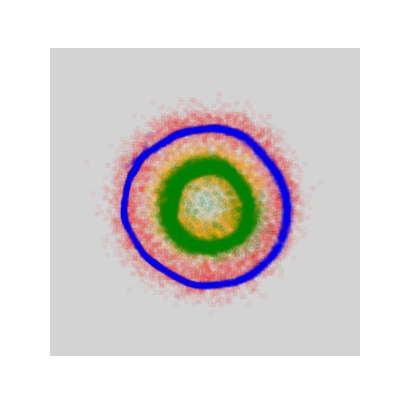}
  \end{subfigure}

  \begin{subfigure}[b]{0.19\linewidth}
    \includegraphics[trim=0.3cm 0.6cm 0.3cm 0.6cm, width=1.8cm, clip]{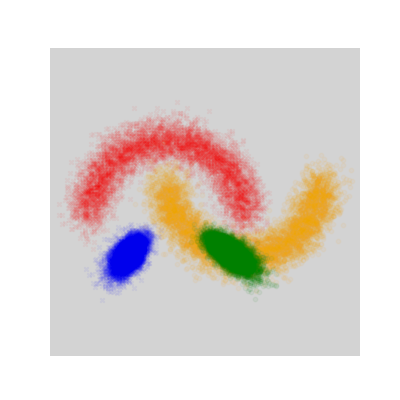}
  \end{subfigure}
  \begin{subfigure}[b]{0.19\linewidth}
    \includegraphics[trim=0.3cm 0.6cm 0.3cm 0.6cm, width=1.8cm, clip]{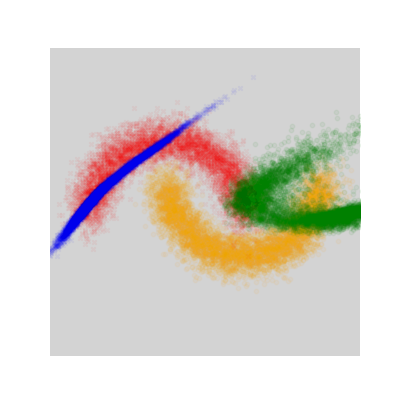}
  \end{subfigure}
  \begin{subfigure}[b]{0.19\linewidth}
    \includegraphics[trim=0.3cm 0.6cm 0.3cm 0.6cm, width=1.8cm, clip]{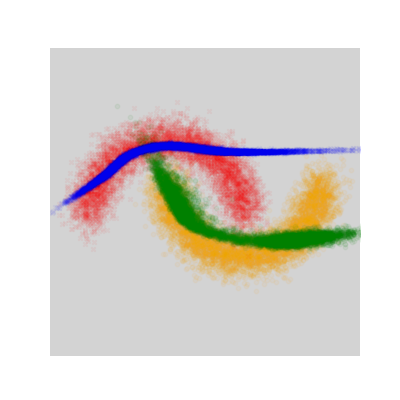}
  \end{subfigure}
  \begin{subfigure}[b]{0.19\linewidth}
    \includegraphics[trim=0.3cm 0.6cm 0.3cm 0.6cm, width=1.8cm, clip]{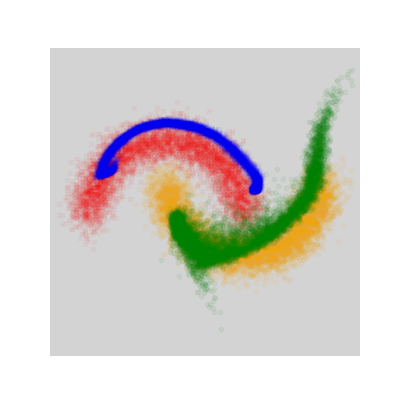}
  \end{subfigure}
  \begin{subfigure}[b]{0.19\linewidth}
    \includegraphics[trim=0.3cm 0.6cm 0.3cm 0.6cm, width=1.8cm, clip]{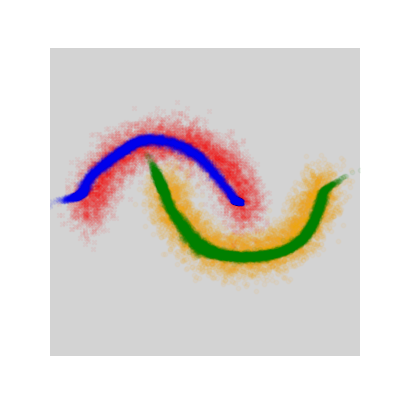}
  \end{subfigure}

  \begin{subfigure}[b]{0.19\linewidth}
    \includegraphics[trim=0.3cm 0.6cm 0.3cm 0.6cm, width=1.8cm, clip]{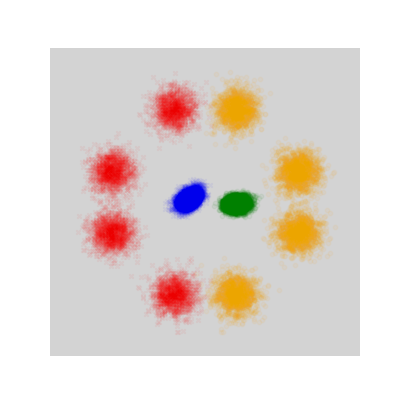}
  \end{subfigure}
  \begin{subfigure}[b]{0.19\linewidth}
    \includegraphics[trim=0.3cm 0.6cm 0.3cm 0.6cm, width=1.8cm, clip]{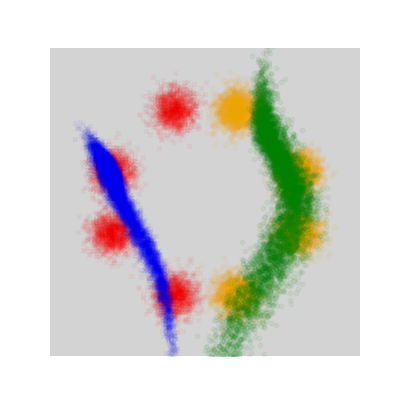}
  \end{subfigure}
  \begin{subfigure}[b]{0.19\linewidth}
    \includegraphics[trim=0.3cm 0.6cm 0.3cm 0.6cm, width=1.8cm, clip]{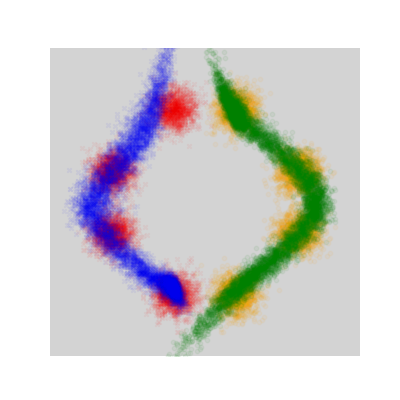}
  \end{subfigure}
  \begin{subfigure}[b]{0.19\linewidth}
    \includegraphics[trim=0.3cm 0.6cm 0.3cm 0.6cm, width=1.8cm, clip]{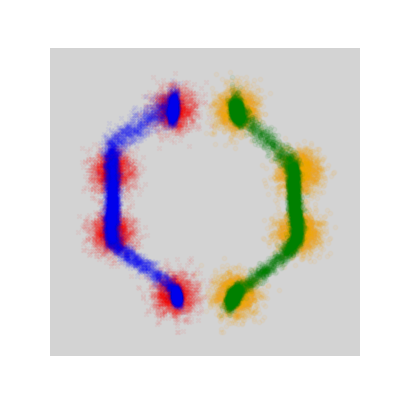}
  \end{subfigure}
  \begin{subfigure}[b]{0.19\linewidth}
    \includegraphics[trim=0.3cm 0.6cm 0.3cm 0.6cm, width=1.8cm, clip]{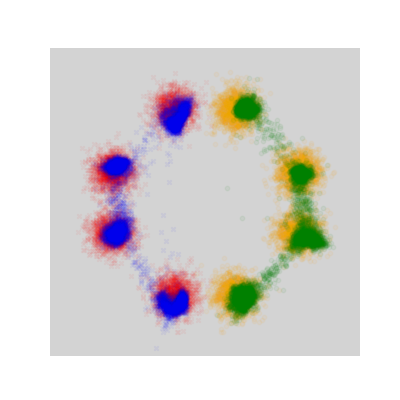}
  \end{subfigure}

  \begin{subfigure}[b]{0.19\linewidth}
    \includegraphics[trim=0.3cm 0.6cm 0.3cm 0.6cm, width=1.8cm, clip]{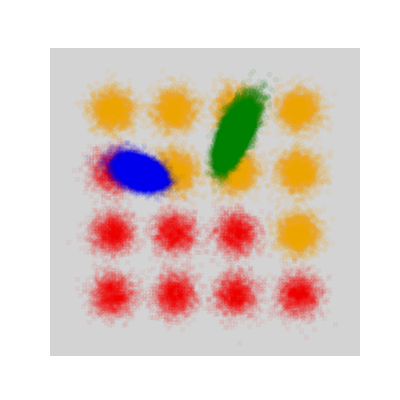}
  \end{subfigure}
  \begin{subfigure}[b]{0.19\linewidth}
    \includegraphics[trim=0.3cm 0.6cm 0.3cm 0.6cm, width=1.8cm, clip]{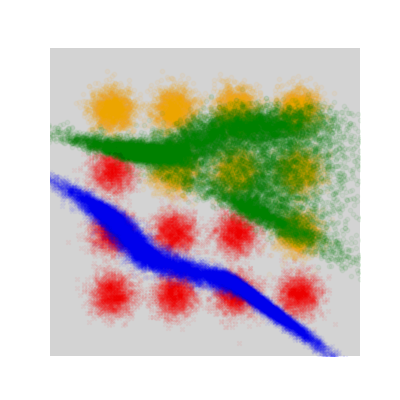}
  \end{subfigure}
  \begin{subfigure}[b]{0.19\linewidth}
    \includegraphics[trim=0.3cm 0.6cm 0.3cm 0.6cm, width=1.8cm, clip]{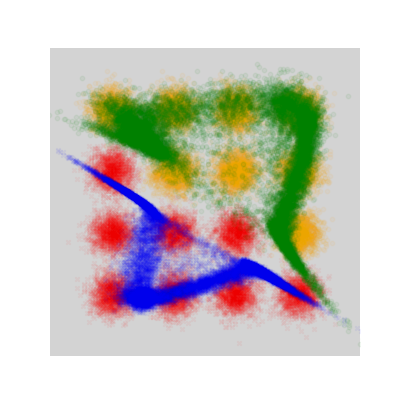}
  \end{subfigure}
  \begin{subfigure}[b]{0.19\linewidth}
    \includegraphics[trim=0.3cm 0.6cm 0.3cm 0.6cm, width=1.8cm, clip]{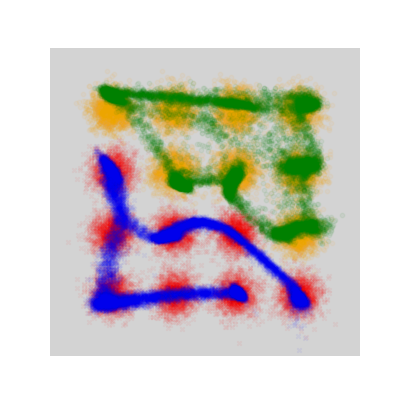}
  \end{subfigure}
  \begin{subfigure}[b]{0.19\linewidth}
    \includegraphics[trim=0.3cm 0.6cm 0.3cm 0.6cm, width=1.8cm, clip]{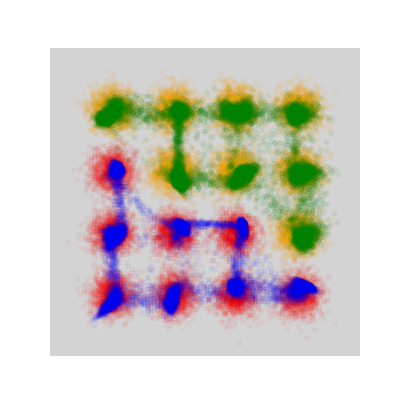}
  \end{subfigure}
\caption{\small Evolution of the positive samples (in green) and negative samples (in blue) produced by GenPU. The true positive samples (in orange) and true negative samples (in red) are also illustrated.} \label{fig2}
\vspace{-0.2cm}
\end{figure}
\subsection{synthetic simulation}
We begin our test with a toy example to visualize the learning behaviors of our GenPU. The training samples are synthesized using concentric circles, Gaussian mixtures and half moons functions with Gaussian noises added to the data (standard deviation is $0.1414$). The training set contains $5000$ positive and $5000$ negative samples, which are then partitioned into $500$ positively labelled and $9500$ unlabelled samples. We establish the generators with two fully connected hidden layers and the discriminators with one hidden layer. There are $128$ ReLU units contained in all hidden layers. The dimensionality of the input latent code is set to $256$. Fig.\ref{fig2} depicts the evolution of positive and negative samples produced by GenPU through time. As expected, in all the scenarios, the induced generator distributions successfully converge to the respective true data distributions given limited P data. Notice that the Gaussian mixtures cases demonstrate the capability of our GenPU to learn a distribution with multiple submodes.

\begin{table} [!ht]
\begin{center}
{ \scriptsize
    \begin{tabular}{ccc}
    \hline\noalign{\smallskip}
    Operation & Feature Maps & Nonlinearity \\
    \noalign{\smallskip}\hline\noalign{\smallskip}
    $G_{p}(\textbf{z}),G_{n}(\textbf{z})\colon \textbf{z} \sim \mathcal{N}(0, I) $ & 100 & \\
    fully connected & 256 & leaky relu \\
    fully connected & 256 & leaky relu \\
    fully connected & 256/784 & tanh \\ \hline
    $D_{p}(\textbf{x}), D_{n}(\textbf{x})$ & 256/784 &  \\
    fully connected & 1 & sigmoid \\ \hline
    $D_{u}(\textbf{x})$ & 256/784 &  \\
    fully connected & 256 & leaky relu \\
    fully connected & 256 & leaky relu \\
    fully connected & 1 & sigmoid \\ \hline
    leaky relu slope & 0.2 & \\
    mini-batch size for $\mathcal{X}_{p}$, $\mathcal{X}_{u}$ & 50, 100 & \\
    learning rate & 0.0003 & \\
    optimizer & Adam($0.9$, $0.999$) &   \\
    weight, bias initialization & 0, 0 & \\
    \noalign{\smallskip}\hline\noalign{\smallskip}
    \end{tabular}
}
\end{center}
\vspace{-0.2cm}
\caption{Specifications of network architecture and hyperparameters for USPS/MNIST dataset.} \label{tab1}
\vspace{-0.2cm}
\end{table}

\subsection{mnist and usps dataset}
Next, the evaluation is carried out on MNIST \cite{lecun1998the} and USPS \cite{lecun1990handwritten} datasets. For MNIST, we each time select a pair of digits to construct the P and N sets, each of which consists of $5,000$ training points. The specifics for architecture and hyperparameters are described in Tab.\ref{tab1}.
\begin{figure} [!b]
  \centering
  \begin{subfigure}[b]{0.495\linewidth}
    \includegraphics[trim=0.6cm 0.5cm 0.6cm 0.6cm, width=\linewidth, clip]{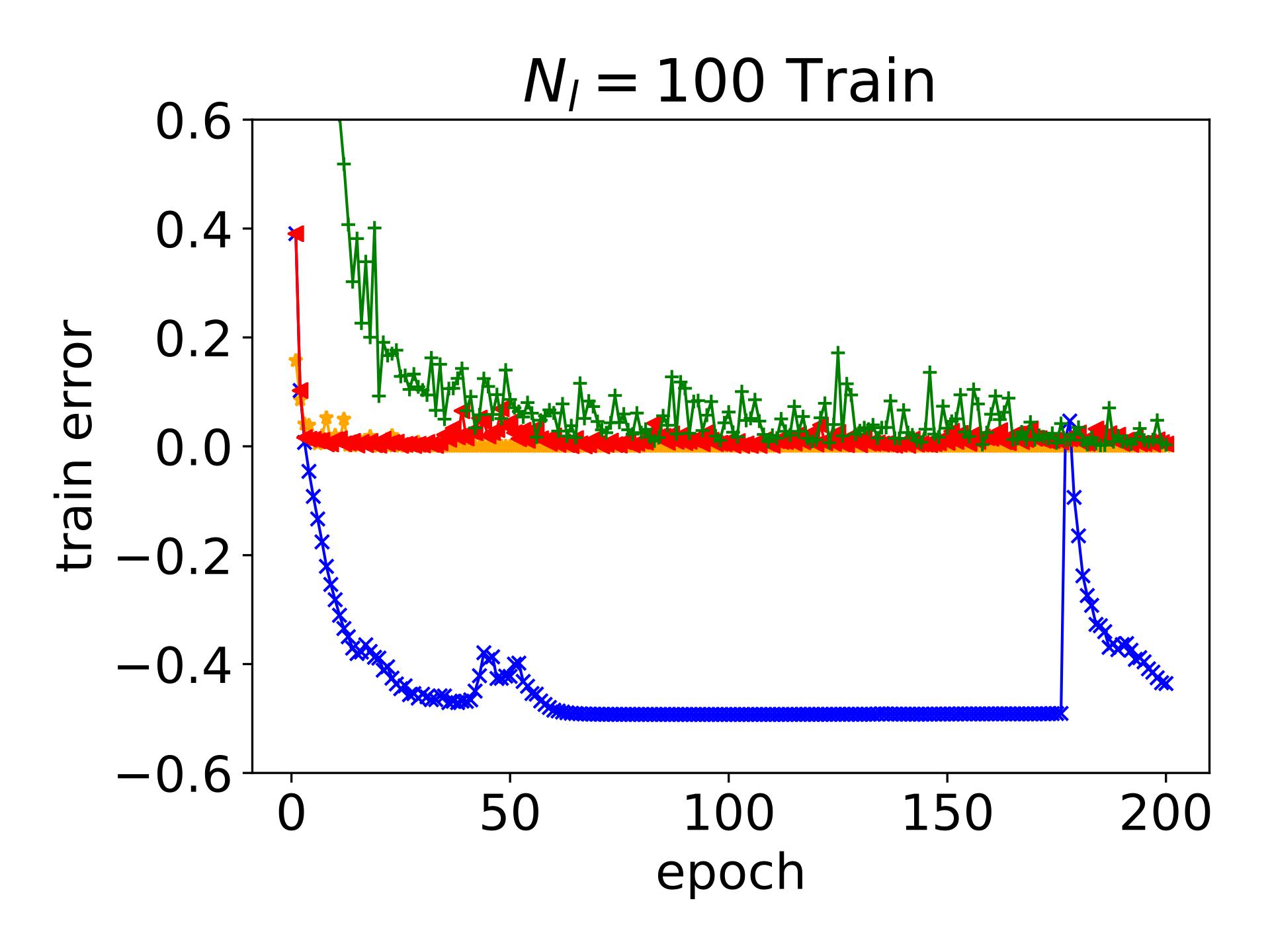}
  \end{subfigure}
  \begin{subfigure}[b]{0.495\linewidth}
    \includegraphics[trim=0.6cm 0.5cm 0.6cm 0.6cm, width=\linewidth, clip]{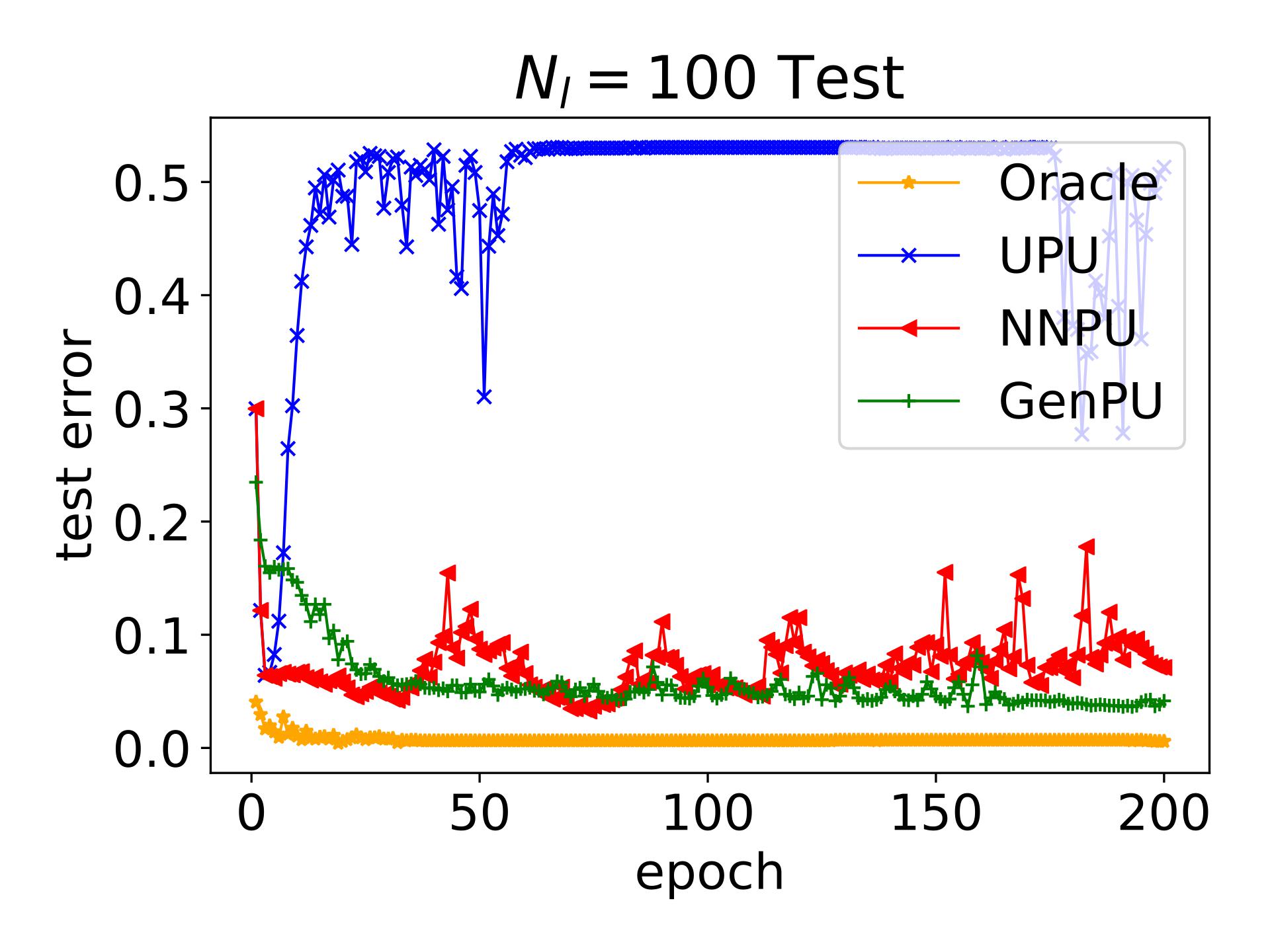}
  \end{subfigure}
  \begin{subfigure}[b]{0.495\linewidth}
    \includegraphics[trim=0.6cm 0.5cm 0.6cm 0.6cm, width=\linewidth, clip]{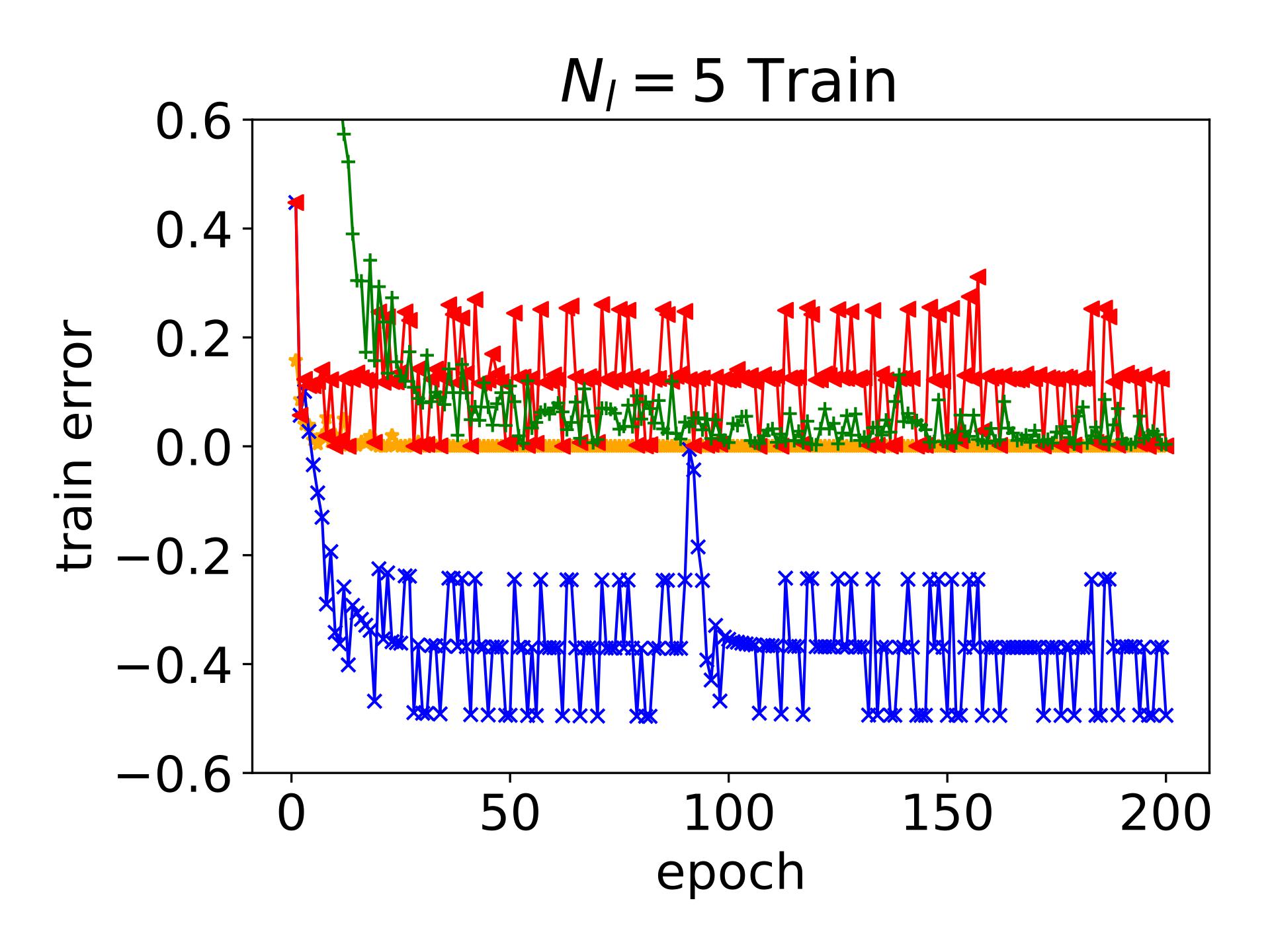}
  \end{subfigure}
  \begin{subfigure}[b]{0.495\linewidth}
    \includegraphics[trim=0.6cm 0.5cm 0.6cm 0.6cm, width=\linewidth, clip]{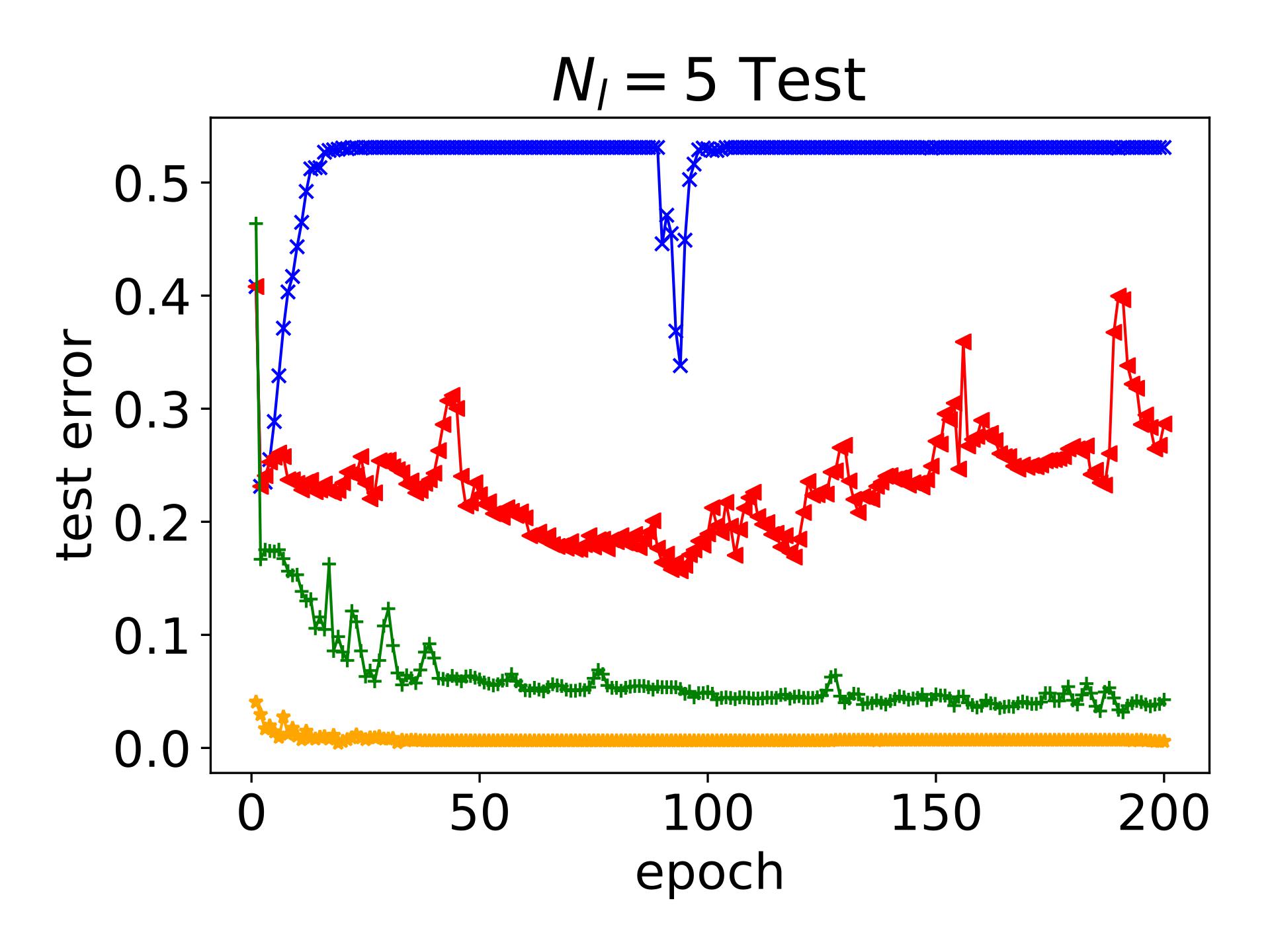}
  \end{subfigure}
\caption{Training error and test error of deep PN classifiers on MINST for the pair `$3$' vs `$5$' with distinct $N_{l}$. Top$\colon$(a) and (b) for $N_{l}=100$. Bottom$\colon$(c) and (d) for $N_{l}=5$.} \label{fig3}
\vspace{-0.2cm}
\end{figure}
To be challenging, the results of the most visually similar digit pairs, such as `$3$' vs `$5$' and `$8$' vs `$3$', are recorded in Tab.\ref{tab1}. The best accuracies are shown with the number of labeled positive examples $N_{l}$ ranging from $100$ to $1$. Obviously, our method outperforms UPU in all the cases. We also observe our GenPU achieves better than or comparable accuracy to NNPU when the number of labeled samples is relatively large (i.e., $N_{l}=100$). However, when the labeled samples are insufficient, for instance $N_{l}=5$ of the `$3$' vs `$5$' scenario, the accuracy of GenPU slightly decreases from $0.983$ to $0.979$, which is in contrast to that of NNPU drops significantly from $0.969$ to $0.843$. Spectacularly, GenPU still remains highly accurate even if only one labeled sample is provided whereas NNPU fails in this situation.

\begin{table*} [t]
\begin{center}
{\scriptsize
    \begin{tabular}{| c | c | c | c | c | c | c | c | c |}
    \hline
    \multicolumn{1}{|c|}{MNIST} & \multicolumn{4}{|c|}{`3' vs. `5'} & \multicolumn{4}{|c|}{`8' vs. `3'} \\ \hline
    $N_{p}\colon N_{u}$ & Oracle PN & UPU & NNPU & GenPU & Oracle PN & UPU & NNPU & GenPU \\ \hline
    \textbf{100}$\colon$9900 & 0.993 & 0.914 & 0.969 & \textbf{0.983} & 0.994 & 0.932 & 0.974 & \textbf{0.982} \\ \hline
    \textbf{50}$\colon$9950 & 0.993 & 0.854 & 0.966 & \textbf{0.982 }& 0.994 & 0.873 & 0.965 & \textbf{0.979} \\ \hline
    \textbf{10}$\colon$9990 & 0.993 & 0.711 & 0.866 & \textbf{0.980} & 0.994 & 0.733 & 0.907 & \textbf{0.978} \\ \hline
    \textbf{5}$\colon$9995 & 0.993 & 0.660 & 0.843 & \textbf{0.979} & 0.994 & 0.684 & 0.840 & \textbf{0.976} \\ \hline
    \textbf{1}$\colon$9999 & 0.993 & 0.557 & 0.563 & \textbf{0.976} & 0.994 & 0.550 & 0.573 & \textbf{0.972} \\ \hline
    \end{tabular}
}
\end{center}
\vspace{-0.1cm}
\caption{The accuracy comparison on MNIST for $N_{l}\in\{100, 50, 10, 5, 1\}$.} \label{tab2}
\vspace{-0.2cm}
\end{table*}

Fig.\ref{fig3} reports the training and test errors of the classifiers for distinct settings of $N_{l}$. When $N_{l}$ is $100$, UPU suffers from a serious overfitting to training data, whilst both NNPU and GenPU perform fairly well. As $N_{l}$ goes small (i.e., $5$), NNPU also starts to overfit. It should be mentioned that the negative training curve of UPU (in blue) is because the unbiased risk estimators  \cite{du2014analysis,du2015convex} in \eqref{eq1} contain negative loss term which is unbounded from the below. When the classifier becomes very flexible, the risk can be arbitrarily negative \cite{kiryo2017positive}. Additionally, the rather limited $N_{l}$ cause training processes of both UPU and NNPU behave unstable. In contrast, GenPU avoids overfitting to small training P data by restricting the models of $D_{p}$ and $D_{n}$ from being too complex when $N_{l}$ becomes small (see Tab.\ref{tab1}). For visualization, Fig.\ref{fig4} demonstrates the generated digits with only one labeled `3', together with the projected distributions induced by $G_{p}$ and $G_{n}$. In Tab.\ref{tab3}, similar results can be obtained on USPS data.

\begin{table} [t]
\begin{center}
{\scriptsize
    \begin{tabular}{| c | c | c | c | c | c | c |}
    \hline
    \multicolumn{1}{|c|}{USPS} & \multicolumn{3}{|c|}{`3' vs `5'} & \multicolumn{3}{|c|}{`8' vs `3'} \\ \hline
    $N_{l}\colon N_{u}$ & UPU & NNPU & GenPU & UPU & NNPU & GenPU \\ \hline
    \textbf{50}$\colon$1950 & 0.890 & \textbf{0.965} & \textbf{0.965} & 0.900 & \textbf{0.965} & 0.945 \\ \hline
    \textbf{10}$\colon$1990 & 0.735 & 0.880 & \textbf{0.955} & 0.725 & 0.920 & \textbf{0.935} \\ \hline
    \textbf{5}$\colon$1995 & 0.670 & 0.830 & \textbf{0.950} & 0.630 & 0.865 & \textbf{0.925} \\ \hline
    \textbf{1}$\colon$1999 & 0.540 & 0.610 & \textbf{0.940} & 0.555 & 0.635 & \textbf{0.920} \\ \hline
    \end{tabular}
}
\end{center}
\vspace{-0.1cm}
\caption{The accuracy comparison on USPS for $N_{l}\in\{50, 10, 5, 1\}$.} \label{tab3}
\vspace{-0.1cm}
\end{table}

\begin{figure}
  \centering
  \begin{subfigure}[b]{0.495\linewidth}
    \includegraphics[trim=0.1cm 0.1cm 0.1cm 0.1cm, width=4.0cm, clip]{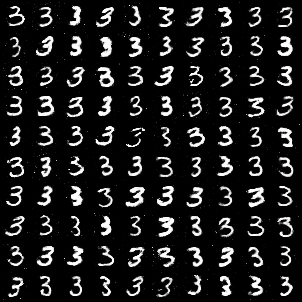}
  \end{subfigure}
  \begin{subfigure}[b]{0.495\linewidth}
    \includegraphics[trim=0.1cm 0.1cm 0.1cm 0.1cm, width=4.0cm, clip]{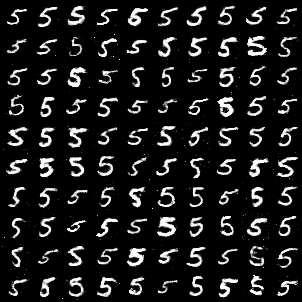}
  \end{subfigure}
  \begin{subfigure}[b]{0.495\linewidth}
    \includegraphics[trim=1.4cm 0.2cm 0.4cm 0.6cm, width=\linewidth, clip]{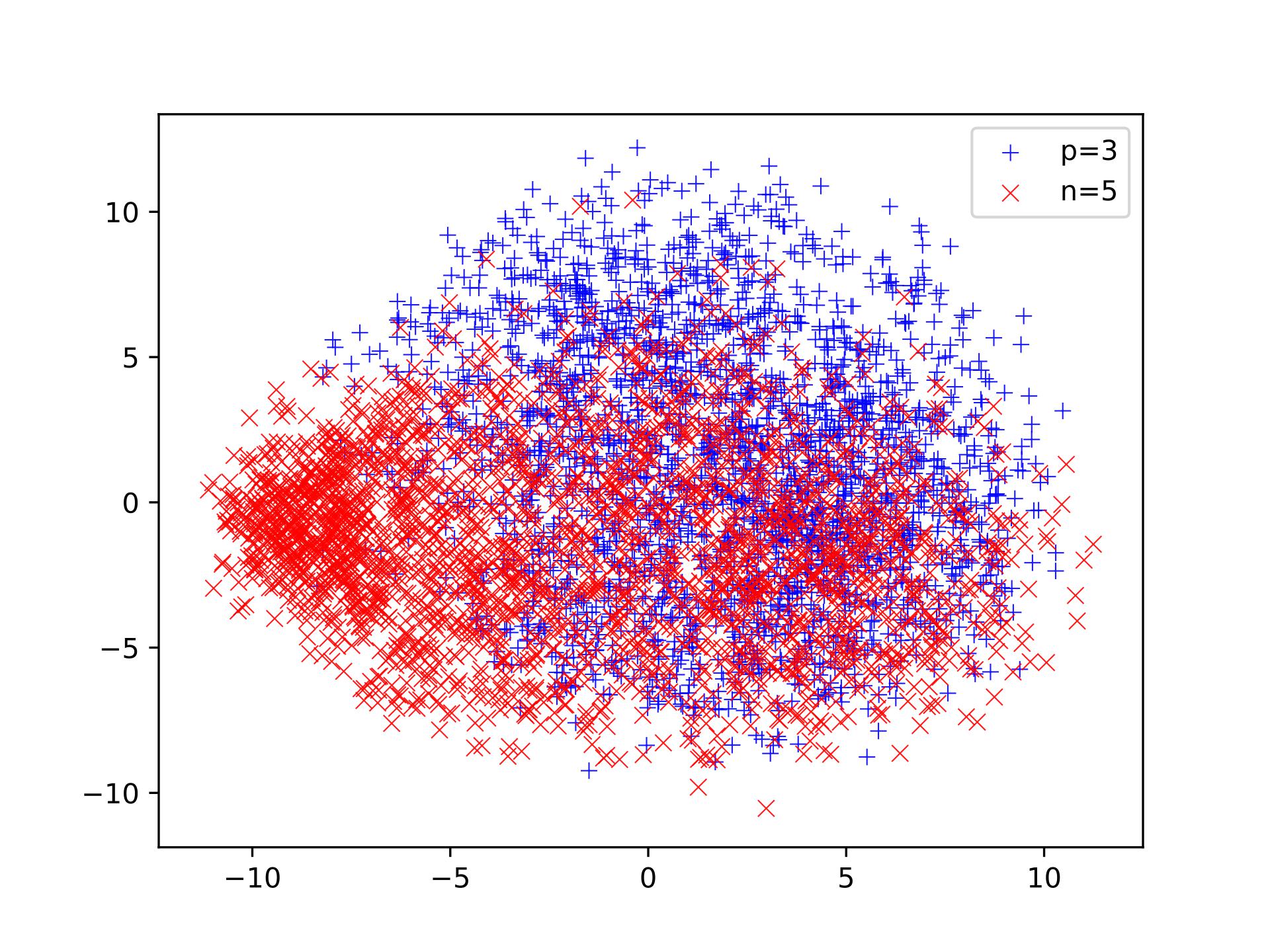}
  \end{subfigure}
  \begin{subfigure}[b]{0.495\linewidth}
    \includegraphics[trim=1.4cm 0.2cm 0.4cm 0.6cm, width=\linewidth, clip]{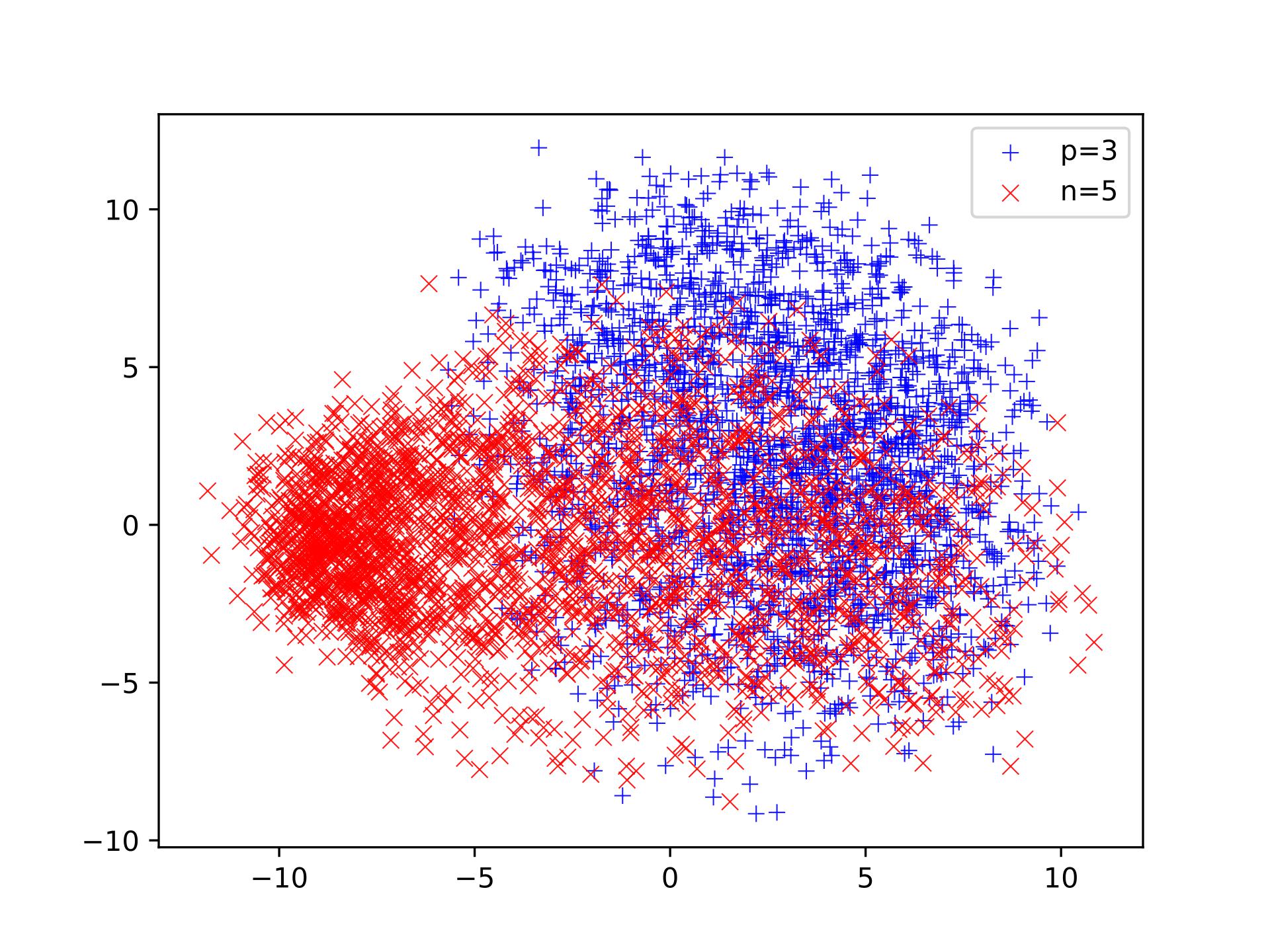}
  \end{subfigure}
  \caption{\small Top$\colon$visualization of positive (left) and negative (right) digits generated using one positive `3' label.  Bottom$\colon$projected distributions of `3' vs `5', with ground truth (left) and generated (right).} \label{fig4}
  \vspace{-0.2cm}
\end{figure}

\section{Discussion}
One key factor to the success of GenPU relies on the capability of underlying GAN in generating diverse samples with high quality standard. Only in this way, the ideal performance could be achieved by training a flexible classifier on those samples. However, it is widely known that the perfect training of the original GAN is quite challenging. GAN suffers from issues of mode collapse and mode oscillation, especially when high-dimensional data distribution has a large number of output modes. For this reason, the similar issue of the original GAN may potentially happen to our GenPU when a lot of output submodes exist. Since it has empirically been shown that the original GAN equipped with JSD inclines to mimic the mode-seeking process towards convergence. Fortunately, our framework is very flexible and generalizable in the sense that it can be established by switching to different underlying GAN variants with more effective distance metrics (i.e., integral probability metric (IPG)) other than JSD (or f-divergence). By doing so, the possible issue of the missing modes can be greatly reduced. For future work, another possible solution is to extend single generator $G_{p}$ ($G_{n}$) to multiple generators $\{G_{p}^{i}\}_{i=1}^{I}$ ($\{G_{n}^{j}\}_{i=1}^{J}$) for the positive (negative) class, also by utilizing the parameter sharing scheme to leverage common information and reduce the computational load.

\bibliographystyle{named}
\bibliography{ijcai18}

\begin{thebibliography}{}

\bibitem[\protect\citeauthoryear{Arjovsky \bgroup \em et al.\egroup
  }{2017}]{arjovsky2017wasserstein}
Martin Arjovsky, Soumith Chintala, and L{\'e}on Bottou.
\newblock Wasserstein gan.
\newblock {\em arXiv preprint arXiv:1701.07875}, 2017.

\bibitem[\protect\citeauthoryear{Denis \bgroup \em et al.\egroup
  }{2005}]{denis2005learning}
Fran\c{c}ois Denis, R{\'e}mi Gilleron, and Fabien Letouzey.
\newblock Learning from positive and unlabeled examples.
\newblock {\em Theor. Comput. Sci.}, 348(1):70--83, 2005.

\bibitem[\protect\citeauthoryear{Denis}{1998}]{denis1998pac}
Fran{\c{c}}ois Denis.
\newblock Pac learning from positive statistical queries.
\newblock In {\em ALT}, volume~98, pages 112--126. Springer, 1998.

\bibitem[\protect\citeauthoryear{du Plessis \bgroup \em et al.\egroup
  }{2014}]{du2014analysis}
Marthinus~C du~Plessis, Gang Niu, and Masashi Sugiyama.
\newblock Analysis of learning from positive and unlabeled data.
\newblock In {\em Advances in neural information processing systems}, pages
  703--711, 2014.

\bibitem[\protect\citeauthoryear{Du~Plessis \bgroup \em et al.\egroup
  }{2015}]{du2015convex}
Marthinus Du~Plessis, Gang Niu, and Masashi Sugiyama.
\newblock Convex formulation for learning from positive and unlabeled data.
\newblock In {\em International Conference on Machine Learning}, pages
  1386--1394, 2015.

\bibitem[\protect\citeauthoryear{Durugkar \bgroup \em et al.\egroup
  }{2016}]{durugkar2016generative}
Ishan Durugkar, Ian Gemp, and Sridhar Mahadevan.
\newblock Generative multi-adversarial networks.
\newblock {\em arXiv preprint arXiv:1611.01673}, 2016.

\bibitem[\protect\citeauthoryear{Elkan and Noto}{2008}]{elkan2008learning}
Charles Elkan and Keith Noto.
\newblock Learning classifiers from only positive and unlabeled data.
\newblock In {\em Proceedings of the 14th ACM SIGKDD international conference
  on Knowledge discovery and data mining}, pages 213--220. ACM, 2008.

\bibitem[\protect\citeauthoryear{Ghosh \bgroup \em et al.\egroup
  }{2017}]{ghosh2017multi}
Arnab Ghosh, Viveka Kulharia, Vinay Namboodiri, Philip~HS Torr, and Puneet~K
  Dokania.
\newblock Multi-agent diverse generative adversarial networks.
\newblock {\em arXiv preprint arXiv:1704.02906}, 2017.

\bibitem[\protect\citeauthoryear{Goodfellow \bgroup \em et al.\egroup
  }{2014}]{goodfellow2014generative}
Ian Goodfellow, Jean Pouget-Abadie, Mehdi Mirza, Bing Xu, David Warde-Farley,
  Sherjil Ozair, Aaron Courville, and Yoshua Bengio.
\newblock Generative adversarial nets.
\newblock In {\em Advances in neural information processing systems}, pages
  2672--2680, 2014.

\bibitem[\protect\citeauthoryear{Hido \bgroup \em et al.\egroup
  }{2008}]{hido2008inlier}
Shohei Hido, Yuta Tsuboi, Hisashi Kashima, Masashi Sugiyama, and Takafumi
  Kanamori.
\newblock Inlier-based outlier detection via direct density ratio estimation.
\newblock In {\em Data Mining, 2008. ICDM'08. Eighth IEEE International
  Conference on}, pages 223--232. IEEE, 2008.

\bibitem[\protect\citeauthoryear{Hoang \bgroup \em et al.\egroup
  }{2017}]{hoang2017multi}
Quan Hoang, Tu~Dinh Nguyen, Trung Le, and Dinh Phung.
\newblock Multi-generator gernerative adversarial nets.
\newblock {\em arXiv preprint arXiv:1708.02556}, 2017.

\bibitem[\protect\citeauthoryear{Jain \bgroup \em et al.\egroup
  }{2016}]{jain2016estimating}
Shantanu Jain, Martha White, and Predrag Radivojac.
\newblock Estimating the class prior and posterior from noisy positives and
  unlabeled data.
\newblock In {\em Advances in Neural Information Processing Systems}, pages
  2693--2701, 2016.

\bibitem[\protect\citeauthoryear{Kiryo \bgroup \em et al.\egroup
  }{2017}]{kiryo2017positive}
Ryuichi Kiryo, Gang Niu, Marthinus C~du Plessis, and Masashi Sugiyama.
\newblock Positive-unlabeled learning with non-negative risk estimator.
\newblock 2017.

\bibitem[\protect\citeauthoryear{LeCun \bgroup \em et al.\egroup
  }{1990}]{lecun1990handwritten}
Yann LeCun, Bernhard~E Boser, John~S Denker, Donnie Henderson, Richard~E
  Howard, Wayne~E Hubbard, and Lawrence~D Jackel.
\newblock Handwritten digit recognition with a back-propagation network.
\newblock In {\em Advances in neural information processing systems}, pages
  396--404, 1990.

\bibitem[\protect\citeauthoryear{LeCun \bgroup \em et al.\egroup
  }{1998}]{lecun1998the}
Yann LeCun, Corinna Cortes, and Christopher~JC Burges.
\newblock The mnist database of handwritten digits.
\newblock 1998.

\bibitem[\protect\citeauthoryear{Lee and Liu}{2003}]{lee2003learning}
Wee~Sun Lee and Bing Liu.
\newblock Learning with positive and unlabeled examples using weighted logistic
  regression.
\newblock In {\em ICML}, volume~3, pages 448--455, 2003.

\bibitem[\protect\citeauthoryear{Li and Liu}{2003}]{li2003learning}
Xiaoli Li and Bing Liu.
\newblock Learning to classify texts using positive and unlabeled data.
\newblock In {\em IJCAI}, volume~3, pages 587--592, 2003.

\bibitem[\protect\citeauthoryear{Li \bgroup \em et al.\egroup
  }{2011}]{li2011positive}
Wenkai Li, Qinghua Guo, and Charles Elkan.
\newblock A positive and unlabeled learning algorithm for one-class
  classification of remote-sensing data.
\newblock {\em IEEE Transactions on Geoscience and Remote Sensing},
  49(2):717--725, 2011.

\bibitem[\protect\citeauthoryear{Liu \bgroup \em et al.\egroup
  }{2002}]{liu2002partially}
Bing Liu, Wee~Sun Lee, Philip~S Yu, and Xiaoli Li.
\newblock Partially supervised classification of text documents.
\newblock In {\em ICML}, volume~2, pages 387--394, 2002.

\bibitem[\protect\citeauthoryear{Liu \bgroup \em et al.\egroup
  }{2003}]{liu2003building}
Bing Liu, Yang Dai, Xiaoli Li, Wee~Sun Lee, and Philip~S Yu.
\newblock Building text classifiers using positive and unlabeled examples.
\newblock In {\em Data Mining, 2003. ICDM 2003. Third IEEE International
  Conference on}, pages 179--186. IEEE, 2003.

\bibitem[\protect\citeauthoryear{Neyshabur \bgroup \em et al.\egroup
  }{2017}]{neyshabur2017stabilizing}
Behnam Neyshabur, Srinadh Bhojanapalli, and Ayan Chakrabarti.
\newblock Stabilizing gan training with multiple random projections.
\newblock {\em arXiv preprint arXiv:1705.07831}, 2017.

\bibitem[\protect\citeauthoryear{Nguyen \bgroup \em et al.\egroup
  }{2017}]{nguyen2017dual}
Tu~Nguyen, Trung Le, Hung Vu, and Dinh Phung.
\newblock Dual discriminator generative adversarial nets.
\newblock In {\em Advances in Neural Information Processing Systems}, pages
  2667--2677, 2017.

\bibitem[\protect\citeauthoryear{Patrini \bgroup \em et al.\egroup
  }{2016}]{patrini2016loss}
Giorgio Patrini, Frank Nielsen, Richard Nock, and Marcello Carioni.
\newblock Loss factorization, weakly supervised learning and label noise
  robustness.
\newblock In {\em International Conference on Machine Learning}, pages
  708--717, 2016.

\bibitem[\protect\citeauthoryear{Salimans \bgroup \em et al.\egroup
  }{2016}]{salimans2016improved}
Tim Salimans, Ian Goodfellow, Wojciech Zaremba, Vicki Cheung, Alec Radford, and
  Xi~Chen.
\newblock Improved techniques for training gans.
\newblock In {\em Advances in Neural Information Processing Systems}, pages
  2234--2242, 2016.

\bibitem[\protect\citeauthoryear{Smola \bgroup \em et al.\egroup
  }{2009}]{smola2009relative}
Alex Smola, Le~Song, and Choon~Hui Teo.
\newblock Relative novelty detection.
\newblock In {\em Artificial Intelligence and Statistics}, pages 536--543,
  2009.

\bibitem[\protect\citeauthoryear{Tolstikhin \bgroup \em et al.\egroup
  }{2017}]{tolstikhin2017adagan}
Ilya Tolstikhin, Sylvain Gelly, Olivier Bousquet, Carl-Johann Simon-Gabriel,
  and Bernhard Sch{\"o}lkopf.
\newblock Adagan: boosting generative models.
\newblock {\em arXiv preprint arXiv:1701.02386}, 2017.

\bibitem[\protect\citeauthoryear{Wang \bgroup \em et al.\egroup
  }{2016}]{wang2016ensembles}
Yaxing Wang, Lichao Zhang, and Joost van~de Weijer.
\newblock Ensembles of generative adversarial networks.
\newblock {\em arXiv preprint arXiv:1612.00991}, 2016.

\bibitem[\protect\citeauthoryear{Ward \bgroup \em et al.\egroup
  }{2009}]{ward2009presence}
Gill Ward, Trevor Hastie, Simon Barry, Jane Elith, and John~R Leathwick.
\newblock Presence-only data and the em algorithm.
\newblock {\em Biometrics}, 65(2):554--563, 2009.

\end{thebibliography}

\end{document}